\documentclass[journal]{IEEEtran} % 或 conference

\usepackage{amsmath,amssymb,bm}

\usepackage{graphicx}
\usepackage{booktabs}
\usepackage{multirow}
\usepackage{array}
\usepackage{comment} 
\usepackage[caption=false,font=footnotesize]{subfig}
\usepackage[ruled,vlined,linesnumbered]{algorithm2e}
\usepackage[hidelinks]{hyperref}
\usepackage{cleveref}
\usepackage{xcolor}
\usepackage{soul}
\newtheorem{proposition}{Proposition}

\SetAlgoLined
\SetKwInOut{Input}{Input}
\SetKwInOut{Output}{Output}
\SetKwComment{Comment}{$\triangleright$\ }{}
\SetCommentSty{textit}  
\SetInd{1em}{1em}  
\SetNlSty{textbf}{}{}  % 行号加粗

\SetKwFor{For}{for}{do}{end}
\SetKwFor{ForEach}{for each}{do}{end}
\SetKwIF{If}{ElseIf}{Else}{if}{then}{else if}{else}{end}
\SetKw{KwTo}{to}

\begin{document}
\title{HankelKANet: Lightweight Path Loss Estimation via Physics-Inspired Hankel Convolution}
\author{Yue~Ruan, Xiaoxiao Cheng, Yunan Zhu, Wenlong Huang, Meng Hua, \textit{Senior Member, IEEE}

\thanks{Yue Ruan, Xiaoxiao Cheng, Yunan Zhu, Wenlong Huang are with the School of Computer Science and Technology, Anhui University of Technology, Maanshan 243002, China (e-mail: yue\_ruan@ahut.edu.cn, chengxiaoxiao@ahut.edu.cn, yunanzhu@ahut.edu.cn, and whuang@ahut.edu.cn.)}% <-this % 
\thanks{Meng Hua is with the Department of Electrical and Electronic Engineering, Imperial College London, SW7 2AZ London, U.K. (e-mail: m.hua@imperial.ac.uk).}
\thanks{The corresponding author is Meng Hua.}
}

% The paper headers
%\markboth{IEEE Transactions on Communication, VOL.~**, NO.~**, xxx~2026
%}{Y. RUAN \MakeLowercase{\textit{et al.}}: High-Efficiency Diode and Transistor Rectifiers}

% ====================================================================
\maketitle

% === ABSTRACT ====================================================================
% =================================================================================
\begin{abstract}
Accurate path loss (PL) estimation is essential for wireless network planning and optimization. This paper proposes HankelKANet, a lightweight physics-inspired network for PL estimation that structurally embeds electromagnetic propagation characteristics into the neural architecture. Built on a single residual UNet backbone, the proposed model integrates urban layout features with a convolutional Kolmogorov-Arnold network (KAN). Specifically, the convolutional kernels are parameterized by the radial amplitudes of the zeroth- and first-order Hankel functions. This formulation acts as a physics-inspired nonlinear distance encoding, enabling the network to capture dominant attenuation characteristics directly without requiring additional physics-based loss terms. Experimental results demonstrate that HankelKANet achieves state-of-the-art accuracy and efficiency on both the simulated RadioMapSeer dataset and the real-world Huawei RSRPSet. Compared with the leading baseline, PEFNet, it reduces the parameter count by 78\% and lowers the RMSE by 16.96\% on the simulated data, while further exhibiting robust cross-dataset transferability and high data efficiency during target-domain adaptation. To interpret these empirical gains, we mathematically establish a parameter-conditional spatial gradient bound. Using discrete Dirichlet energy (DDE) and Jacobian sensitivity heatmaps, we empirically verify that this physical encoding intrinsically constrains the network’s spatial derivatives. This structural smoothness effectively eliminates non-physical spatial ripples, preventing overfitting to simulator artifacts and serving as the fundamental driver for robust real-world generalization. \footnote{To facilitate reproducibility, the source code for this work is publicly available at: https://github.com/chengxx0317-lgtm/HankelKANet.}
\end{abstract}

% === KEYWORDS ====================================================================
% =================================================================================
\begin{IEEEkeywords}
path loss estimation, convolutional Kolmogorov-Arnold network, Hankel kernel, discrete Dirichlet energy, convolutional neural network.
\end{IEEEkeywords}

% For peer review papers, you can put extra information on the cover
% page as needed:
% \ifCLASSOPTIONpeerreview
% \begin{center} \bfseries EDICS Category: 3-BBND \end{center}
% \fi
%
% For peerreview papers, this IEEEtran command inserts a page break and
% creates the second title. It will be ignored for other modes.
\IEEEpeerreviewmaketitle

% ====================================================================
% ====================================================================
% ====================================================================

% === I. INTRODUCTION =============================================================
% =================================================================================
\section{Introduction}

\IEEEPARstart{P}{ath} Loss (PL) estimation is a fundamental task in wireless communications. Accurate and efficient PL prediction plays a crucial role in the deployment and service quality assurance of 5G/6G systems, enabling critical functions such as network coverage planning, interference suppression, resource allocation, and performance optimization \cite{phillips2012survey, joseph2023path, ruan2025divided}. Consequently, developing effective and precise PL prediction schemes remains a key research priority in the field of wireless communications.

Traditional PL estimation methods, i.e., stochastic models such as 3GPP TR 36.873 \cite{3gpp36873} and TR 38.901 \cite{3gpp38901}, which rely on empirical
formulas derived from measurement data, are computationally efficient but lack adaptability to complex environments and offer limited accuracy. Deterministic models (e.g., ray tracing \cite{yun2015ray}) are capable of high-fidelity estimation by precisely simulating signal propagation paths; however, they incur extremely high computational complexity due to the modeling of numerous ray-environment interactions and cannot meet real-time requirements. Pure data‑driven models, such as ANN \cite{huang2018big} and RadioUNet \cite{levie2021radiounet}, use deep learning to uncover data correlations. However, their reliance on training data distribution and lack of physical constraints remain inherent limitations, making estimation errors likely in unseen scenarios.

To overcome the limitations of the aforementioned approaches, physics-informed neural networks (PINNs), also referred to as physics‑informed machine learning (PIML) \footnote{In this paper, PINNs and PIML are used interchangeably without distinction.}, offer an effective framework for integrating physical knowledge with data-driven learning \cite{karniadakis2021physics, meng2025physics}. By embedding physical laws, such as partial differential equations (PDEs), into the loss function, PINNs guide the model training process under physical constraints, thereby balancing the flexibility of data-driven methods with the reliability of physics-based models \cite{raissi2019physics, luo2025physics}. Building on this foundation, PEFNet \cite{jiang2024physics} fuses the volume integral equation (VIE) of electromagnetic propagation with deep learning, employing a dual-UNet architecture to separately handle physics-driven and data-driven learning. This approach achieves a promising balance between generalization and efficiency, and serves as a representative hybrid-driven method.

Despite its merits, PEFNet's complex architecture and hybrid loss function leave room for structural simplification and efficiency improvements. Recently, Kolmogorov-Arnold networks (KANs) \cite{liu2024kan} and their variants \cite{toscano2025pinns, toscano2025kkans} have offered a new way to embed physical priors by representing functions as linear combinations of learnable univariate functions. This structurally avoids the soft penalty terms required by PINNs. Notably, Hankel functions, the core components of the 2D free-space Green's function for electromagnetic propagation \cite{chew1999waves}, possess intrinsic nonlinearity. This makes them ideal learnable basis functions for a KAN-inspired, lightweight, and physics-aware design.

Motivated by this, we propose HankelKANet, an efficient architecture that directly embeds KAN-inspired learnable basis functions into convolutional kernels under the guidance of Hankel function priors.

The main contributions of this paper are summarized as follows:

\begin{itemize}
\item \textbf{A Lightweight Physics-Inspired Architecture:} We propose HankelKANet, a structurally streamlined model that processes a minimalist two-channel input (base station distance and building maps) via a single residual U-Net backbone equipped with a novel dual-convolution fusion module. This module seamlessly integrates standard spatial convolutions with Hankel-parameterized convolutions, intrinsically enforcing physical consistency without relying on complex hybrid loss functions.

\item \textbf{High Accuracy and Exceptional Efficiency:} Evaluations on both the simulated RadioMapSeer and the real-world Huawei RSRPSet demonstrate that HankelKANet achieves state-of-the-art predictive performance. Compared to the leading baseline, PEFNet \cite{jiang2024physics}, our model reduces the parameter count by 78\%, cuts training time by 26\%, and improves inference speed by 22.2\%, while simultaneously lowering the RMSE.

\item \textbf{Analytical Insights and Real-World Generalization:} We derive a parameter-conditional spatial gradient bound to provide analytical insights into the model’s inherent stability. Supported by empirical validation via discrete Dirichlet energy (DDE) and Jacobian sensitivity heatmaps, we demonstrate that this structural bias inherently enforces spatial smoothness, thereby preventing overfitting to simulator artifacts and ensuring robust cross-domain generalizability in real-world scenarios.
\end{itemize}

The remainder of this paper is organized as follows. Section~\uppercase\expandafter{\romannumeral2} reviews the methodological background. Section~\uppercase\expandafter{\romannumeral3} presents the system model. Section~\uppercase\expandafter{\romannumeral4} details the proposed HankelKANet architecture. Section~\uppercase\expandafter{\romannumeral5} reports the experimental results, including ablation and complexity studies. Section~\uppercase\expandafter{\romannumeral6} analyzes the model’s spatial smoothness and real-world generalization. Section~\uppercase\expandafter{\romannumeral7} concludes the paper.

\section{Methodology Background}
\label{sec:related}
This section establishes the methodological background. We first review PEFNet to identify its architectural and optimization bottlenecks, motivating a structurally efficient alternative. We then introduce three core components: the Hankel-function representation of wave fields (physical prior), convolutional KANs (architectural framework), and geometric complexity. Together, they provide the theoretical and analytical foundation to ensure the model’s spatial smoothness, physical plausibility, and generalization.

\subsection{PEFNet Analysis}
\label{PEFNet-analysis}

PEFNet \cite{jiang2024physics} is a pioneering hybrid model for path loss estimation that integrates fundamental electromagnetic principles with deep learning. Its core innovation is incorporating the VIE via a two-stage, dual-UNet architecture. First, a knowledge-driven UNet processes a 4-channel input to estimate the total field under VIE constraints. Then, a data-driven UNet learns the residual against measured data. By infusing physical priors, PEFNet achieves a better balance of accuracy and efficiency than pure data-driven or deterministic models.

However, PEFNet's design presents several challenges that limit its efficiency and broader applicability, directly motivating our work:
\begin{itemize}
    \item \textbf{Redundant Inputs}: Using the real and imaginary parts of the incident field as separate channels requires complex numerical preprocessing. As these fields are highly correlated with the base station distance, this 4-channel redundancy increases computational overhead and the risk of overfitting.
    
    \item \textbf{Cumbersome Architecture}: The dual-UNet structure, with a maximum channel count of 1024 and approximately 118.38M parameters, incurs massive memory and computational costs, hindering deployment on resource-constrained edge devices or large-scale real-time processing.
    
    \item \textbf{Sensitive Hybrid Loss}: The objective function ($\mathcal{L} = \mathcal{L}_{\text{PHY}} + \lambda \mathcal{L}_{\text{DAT}}$) requires meticulous manual tuning of $\lambda$. Improper weighting often triggers gradient conflicts between physical constraints and data fitting, destabilizing convergence and degrading performance \cite{wang2021understanding, rohrhofer2023data, wang2022and, bischof2025multi}.
\end{itemize}

To overcome these critical bottlenecks, Section IV introduces HankelKANet, a structurally regularized approach that inherently resolves the trade-off between physical fidelity, architectural efficiency, and optimization stability.

\subsection{Wave Field Representation via Hankel Functions}
\label{sec:hankel_physics}

Under the 2D scalar wave approximation, the electromagnetic field $u(\boldsymbol{x})$ at an observation point $\boldsymbol{x}$, excited by a transmitter at $\boldsymbol{x}_{\mathrm{Tx}}$, is strictly governed by the following integral representation:
\begin{equation}
    u(\boldsymbol{x}) = 
    \underbrace{q_{\mathrm{Tx}} G_0 \left(\boldsymbol{x}, \boldsymbol{x}_{\mathrm{Tx}}\right)}_{\text{\scriptsize Primary Direct Field}} 
    + 
    \underbrace{\int_{\mathcal{S}} G_0 \left(\boldsymbol{x}, \boldsymbol{y}\right) J(\boldsymbol{y}) \,\mathrm{d}\boldsymbol{y}}_{\text{\scriptsize Secondary Scattered Field}},
    \label{eq:resp_field_decomp}
\end{equation}
where $G_0(\boldsymbol{x},\boldsymbol{y}) = \frac{\mathrm{i}}{4} H_0^{(1)}\left(k_0 \|\boldsymbol{x}-\boldsymbol{y}\|_2\right)$ is the free-space Green's function with the background wavenumber $k_0$, and $J(\boldsymbol{y})$ represents the equivalent induced sources over the scatterer boundaries $\mathcal{S}$.

This decomposition explicitly highlights the fundamental role of Hankel functions in wave propagation \cite{zhang2002fast}. The primary direct field (the first term) is dictated by the zeroth-order Hankel function of the first kind, $H_0^{(1)}$. Physically, $H_0^{(1)}$ models an isotropic, radially symmetric cylindrical wave, whose magnitude governs the fundamental geometric energy spreading in free space. Furthermore, the local spatial gradient of this primary wave is analytically characterized by the first-order mode $H_1^{(1)}$, based on the derivative relationship $\frac{\mathrm{d}}{\mathrm{d}z} H_0^{(1)}(z) = -H_1^{(1)}(z)$. Conversely, the integral term representing the secondary scattered field accounts for complex phenomena such as scattering and diffraction from environmental boundaries. In classical analytical methods, evaluating this integral requires expanding the field into an infinite series of higher-order angular harmonics (multipole modes, $H_n^{(1)}$ for $n \ge 2$), which capture the intricate angular dependencies and directional fluctuations induced by environmental obstacles \cite{gonzalez2021advanced}.

While Eq.~\eqref{eq:resp_field_decomp} provides an exact boundary-value formulation, explicitly evaluating the full scattering integral and its corresponding high-order multipole expansions over electrically large urban layouts is computationally intractable. However, this rigorous physical decomposition successfully separates the unperturbed primary radial propagation from the complex secondary angular scattering. It naturally lays the theoretical foundation for extracting a highly structured, physics-inspired spatial prior for macroscopic channel modeling, as detailed in Section \ref{sec:channel_model}.

\subsection{Kolmogorov-Arnold Network}
\label{sec:KAN}

Inspired by the Kolmogorov-Arnold representation theorem \cite{kolmogorov1957representations}, Kolmogorov-Arnold Networks (KANs) replace the static linear weights of traditional neural networks with learnable univariate functions \cite{liu2024kan}. Instead of simple scalar multiplications, each connection in a KAN applies a non-linear function parameterized by a specific basis family. To construct these mappings, researchers have investigated a rich variety of bases, encompassing splines \cite{ta2024bsrbf}, frequency-domain representations like Fourier series \cite{xu2025enhancing} and wavelets \cite{seydi2024unveiling}, as well as radial \cite{chao2026rbf} and polynomial \cite{seydi2024exploring} functions.

To efficiently process spatial data, Convolutional KANs extend this paradigm by embedding learnable univariate functions into local convolution kernels \cite{bodner2024convolutional}. For an input feature map $\boldsymbol{I}$ and a $K\times K$ kernel, the convolution operation is defined as:
\begin{equation}\label{eq:ckan}
(\boldsymbol{I} \ast \boldsymbol{\Phi})_{i,j} = \sum_{p=1}^K \sum_{q=1}^K \varphi_{p,q}(\boldsymbol{I}_{i+p-1,j+q-1}),
\end{equation}
where $\boldsymbol{\Phi} = \{\varphi_{p,q}\}$ represents a set of learnable univariate basis functions associated with each spatial position $(p,q)$ in the kernel. 

By replacing fixed kernel weights with analytical basis evaluations, Convolutional KANs provide a flexible and expressive framework. More importantly for physics-aware tasks, this architecture allows for the direct embedding of specialized physical priors, such as electromagnetic wave propagation templates, into the fundamental convolution operation.

\subsection{Geometric Complexity and Jacobian Sensitivity}
\label{sect:geometric_complexity}

In the theoretical analysis of neural networks, spatial smoothness and generalization capabilities are closely tied to the network's gradient with respect to its inputs. For a learned mapping \(f_{\bm{\theta}}\), its microscopic sensitivity to local input perturbations is characterized by the Jacobian matrix, \(\mathbf{J}(\mathbf{x}) = \nabla_{\mathbf{x}} f_{\bm{\theta}}(\mathbf{x})\). The Frobenius norm of this matrix, \(\|\mathbf{J}(\mathbf{x})\|_F\), quantifies the local rate of change at a specific spatial coordinate \(\mathbf{x}\). A bounded Jacobian norm indicates a stable and continuous local mapping, whereas excessively large values imply abrupt, high-frequency spatial fluctuations.

To evaluate the overall smoothness of a model from a macroscopic perspective, geometric analysis utilizes the Discrete Dirichlet Energy (DDE) \cite{dherin2022neural}. The DDE quantifies the global geometric complexity by aggregating the squared local Jacobian norms over an entire dataset \(D\):
\begin{equation}
\langle f_{\bm{\theta}}, D \rangle = \frac{1}{|D|}\sum_{\mathbf{x}\in D} \|\nabla_{\mathbf{x}} f_{\bm{\theta}}(\mathbf{x})\|_F^2.
\label{eq:dirichlet_energy}
\end{equation}
Originating from harmonic function theory \cite{mitrea2020geometric, yueh2023theoretical}, the DDE measures the average functional variation across the input distribution. Minimizing this energy encourages networks to find solutions with minimal intrinsic stretching, which has been theoretically linked to improved generalization \cite{dherin2022neural}. Recently, this concept has also been adopted to analyze the functional properties and training dynamics of KAN architectures \cite{toscano2025kkans}.

In the context of wireless communications, these mathematical properties carry direct physical significance. Real-world wave propagation, including macroscopic path loss and diffraction, exhibits inherent spatial coherence. Consequently, physically plausible radio maps naturally demand spatial continuity. A neural network that generates highly irregular spatial gradients, which mathematically manifest as localized Jacobian hotspots and an anomalously high global DDE, fundamentally violates electromagnetic principles. Such behavior suggests that the network is fitting non-physical spatial noise instead of the actual wave propagation laws.

\section{System Model}
\label{sec:system}

This section formulates the map-based path loss estimation problem in urban wireless propagation scenarios. We first introduce the simplified large-scale channel model, followed by the map-based scenario representation and the learning objective.

\subsection{Large-Scale Channel and Path Loss Model}
\label{sec:channel_model}

We now formalize the connection between the fundamental wave field representation in Section~\ref{sec:hankel_physics} and the macroscopic radio channel modeling. In wireless communications, considering a transmitter (Tx) at $\mathbf{p}_{\rm Tx} \!\in\!\mathbb{R}^{3}$ and a receiver (Rx) at $\mathbf{p}\!\in\!\mathbb{R}^{3}$, the baseband multi-path channel is typically represented as a superposition of $L$ physical propagation paths:
\begin{equation}
\mathcal{H}(\mathbf{p};t,f) = \sum_{l=1}^{L} c_{l}(\mathbf{p})\,
e^{j\psi_{l}}\,e^{j2\pi\nu_{l}t}\,e^{-j2\pi f\tau_{l}},
\label{eq:multipath_TF}
\end{equation}
where $c_{l}(\mathbf{p})\!\in\!\mathbb{C}$, $\psi_{l}$, $\tau_{l}$, and $\nu_{l}$ denote the complex amplitude, carrier-induced phase, delay, and Doppler shift of the $l$-th path, respectively. Physically, the complex amplitude $c_{l}(\mathbf{p})$ of each path is the macroscopic manifestation of the primary direct field and the secondary scattered field integrations defined in Eq.~\eqref{eq:resp_field_decomp}.

Since the carrier phase $\psi_{l}$ is highly sensitive to distance variations on the scale of a wavelength, $\{\psi_{l}\}$ can be modeled as i.i.d.\ uniform random variables on $[0,2\pi)$. By averaging out these local phase fluctuations, Doppler shifts, and small-scale spatial variability, the large-scale power gain becomes $\beta(\mathbf{p})=\mathbb{E}[|\mathcal{H}(\mathbf{p})|^{2}]=\sum_{l=1}^{L}\mathbb{E}[|c_{l}(\mathbf{p})|^{2}]$, as cross-terms vanish due to phase independence. The targeted large-scale PL is thus defined as:
\begin{equation}
\mathrm{PL}(\mathbf{p}) = -10\log_{10}\beta(\mathbf{p}),
\label{eq:path_loss_def}
\end{equation}
which inherently represents a spatial local average, strictly excluding small-scale fast fading by design.

To make urban path loss modeling computationally tractable at the city scale, we introduce two strategic simplifications in our network design based on the theoretical decomposition in Eq.~\eqref{eq:resp_field_decomp}. First, we adopt a widely used 2D spatial formulation ($\boldsymbol{x}\!\in\!\mathbb{R}^{2}$). This is justified by the practical observations that the Tx is mounted at a fixed height with Rx points on a horizontal user-height plane, and that urban obstacles (e.g., buildings) are well-approximated as vertically extruded prisms. However, even under this 2D approximation, explicitly evaluating the full boundary integration for secondary scattering remains computationally prohibitive. Therefore, instead of a brute-force numerical integration, our architecture relies on a physical decoupling strategy: we mathematically evaluate the Hankel functions on the absolute transmitter-to-receiver distance map ($\|\boldsymbol{x} - \boldsymbol{x}_{\mathrm{Tx}}\|$) to isolate the envelope of the primary direct field as a global spatial prior. The complex secondary scattering and diffraction interactions are then left to be implicitly learned by a parallel network branch operating on the building layouts.

\begin{figure*}[!t]
\centering
\subfloat[]{\includegraphics[scale=0.241]{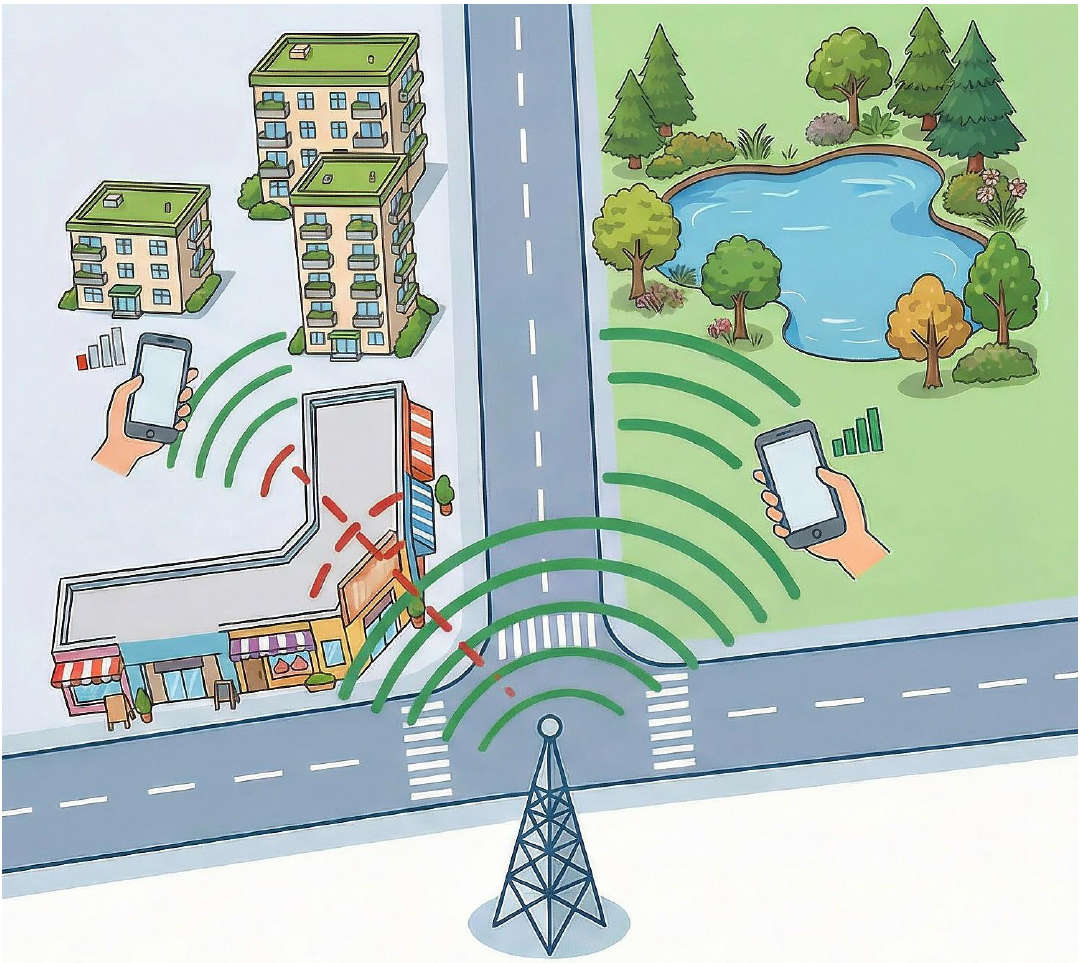}}
\hspace{5mm}
\subfloat[]{\includegraphics[scale=0.245]{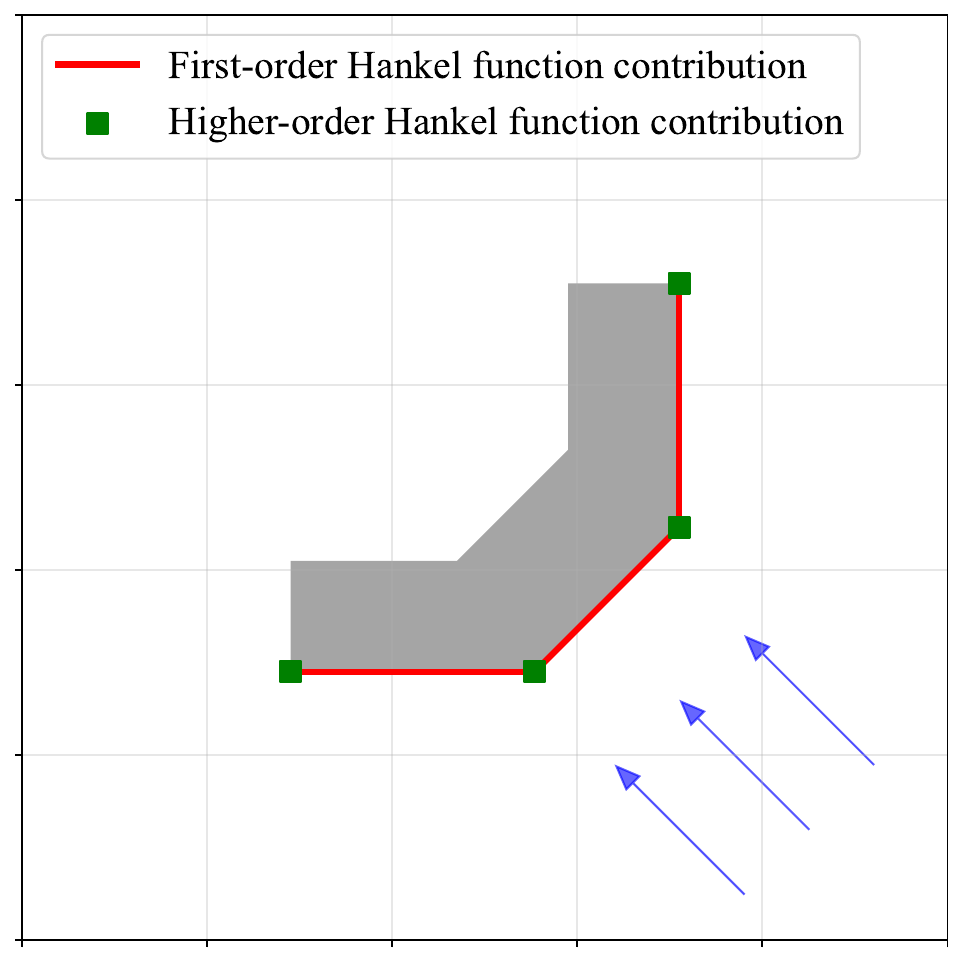}}
\hspace{5mm}
\subfloat[]{\includegraphics[scale=0.378]{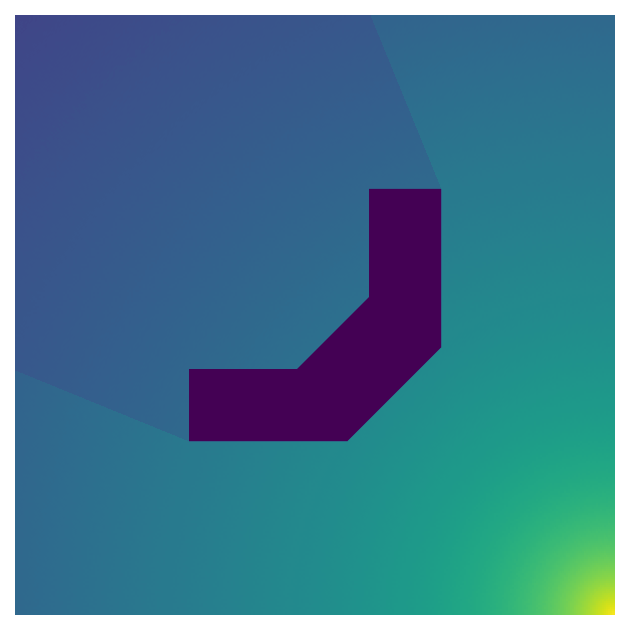}}
\caption{Electromagnetic interpretation of the structured Hankel prior for large-scale path loss. (a) A typical urban scenario exhibiting sharp PL contrasts between dense buildings and open areas. (b) The zeroth- and first-order Hankel magnitudes capture macroscopic cylindrical spreading and fundamental spatial gradients, whereas highly oscillatory higher-order modes ($n\!\geq\!2$) correspond to fine-grained scattering that averages out in large-scale PL. (c) Visualization of the combined effect of the zeroth- and first-order Hankel magnitudes extracted as the spatial prior.}
\label{fig:hankel_vis}
\end{figure*}

Building upon this strategy, we construct a low-order propagation prior specifically tailored for large-scale PL. As illustrated in Fig.~\ref{fig:hankel_vis}, we approximate the spatial prior using exclusively the magnitudes of the zeroth- and first-order Hankel functions based on the following physical considerations:
\begin{itemize}
    \item \textbf{Macroscopic Energy Decay:} The magnitude of the zeroth-order mode ($|H_0^{(1)}|$) directly models the baseline geometric spreading of the primary wave, establishing the fundamental distance-dependent decay envelope.
    \item \textbf{Dominant Spatial Gradients:} The magnitude of the first-order mode ($|H_1^{(1)}|$) captures the dominant radial gradients of the primary wave. Supplying these gradients as a spatial prior equips the neural network to effectively learn abrupt macroscopic power transitions, such as shadow boundaries and edge diffraction induced by the building layouts.
    \item \textbf{Negligibility of Higher-Order Modes:} Higher-order modes ($n \ge 2$) represent highly oscillatory multi-bounce scattering. While driving small-scale fading, these rapid fluctuations are intrinsically smoothed out by the local spatial averaging in Eq.~\eqref{eq:path_loss_def}, making their macroscopic impact negligible.
    \item \textbf{Omission of the Phase Argument:} Because PL focuses solely on the averaged power envelope rather than the coherent superposition of fields, the phase argument of the Hankel functions becomes redundant for describing large-scale fading.
\end{itemize}

By discarding the phase arguments and higher-order scattering terms, we extract a highly structured, physics-inspired, and computationally lightweight spatial prior. The rigorous empirical support for this theoretical simplification is extensively detailed in the ablation study in Section~\ref{subsec:Hankel_Orders}, which demonstrates that omitting the phase and higher-order modes does not degrade the large-scale PL prediction accuracy.

\subsection{Map-Based Scenario Representation}
\label{sec:scenario_representation}

Building on the 2D formulation established above, the region of interest is discretized into a regular two-dimensional grid of size $H\times W$. A grid cell $(x,y)$ corresponds to the physical location $\mathbf{r}_{x,y}$ on the Rx-height plane, and the ground-truth path 
loss map is defined as $\mathbf{Y}\in\mathbb{R}^{H\times W}$ with $\mathbf{Y}(x,y) = PL(\mathbf{r}_{x,y})$.

To characterize the propagation environment, we construct two spatial matrices:
\begin{itemize}
    \item \textbf{Building Layout Map ($B\in\{0,1\}^{H\times W}$):} A binary mask where $B(x,y)=1$ indicates a building (blockage) and $B(x,y)=0$ indicates free space.
    \item \textbf{Distance-Prior Map ($R\in\mathbb{R}^{H\times W}$):} Encodes the geometric distance from the transmitter, defined as $R(x,y) = \|\mathbf{r}_{x,y} - \mathbf{r}_{\rm Tx}\|_2$. In practice, this map is min-max normalized into $[0,1]$.
\end{itemize}

The input representation of a single propagation scenario is the channel-wise concatenation of these two maps:
\begin{equation}
X = \operatorname{Concat}(B,R) \in \mathbb{R}^{2\times H\times W}.
\label{eq:input_representation}
\end{equation}
Thus, the path loss estimation is cast as a two-channel-to-one-channel image translation problem, learning a mapping $\widehat{Y} = F_{\theta}(X)$ parameterized by network weights $\theta$.

\subsection{Learning Objective}
\label{sec:learning_objective}

Given a training dataset $\mathcal{D}=\{(B_i,R_i,Y_i)\}_{i=1}^{N}$, the goal is to optimize the model parameters $\theta$ to accurately predict the path loss maps. Since path loss estimation is essentially a pixel-wise continuous regression problem, the mean squared error (MSE) is adopted as the training objective:
\begin{equation}
\mathcal{L}_{\mathrm{MSE}} = \frac{1}{HW}\sum_{x=1}^{H}\sum_{y=1}^{W}\big(\widehat{Y}(x,y)-Y(x,y)\big)^2.
\end{equation}
Therefore, the final learning objective is
\begin{equation}
\label{eq:loss}
\min_{\theta}\ \mathbb{E}_{(B,R,Y)\sim \mathcal{D}} \, \mathcal{L}_{\mathrm{MSE}}.
\end{equation}

\section{Proposed Method}
\label{sec:method}
In this section, we present the architecture of the proposed HankelKANet that realizes the end-to-end mapping defined in Section~III. The model is designed to combine environmental structural information and propagation-related priors within a compact encoder--decoder framework.

\subsection{Overall Architecture}

HankelKANet adopts a UNet-based encoder--decoder backbone for end-to-end path loss prediction from the two-channel input $\{B,R\}$. This design is motivated by the nature of the task as an image-to-image regression problem, where both the input and output are two-dimensional spatial maps. The encoder progressively extracts hierarchical features and enlarges the receptive field, while the decoder restores spatial resolution and recovers fine-grained details through skip connections.

As illustrated in Fig.~\ref{fig:architecture}, the proposed network consists of two parallel components in the encoding stage: an environment branch for extracting scene-structure features from the building layout map, and a physics-prior branch for extracting propagation-aware features from the distance-prior map. The features from the two branches are fused at multiple scales and then passed to the decoder to generate the final path loss map. The entire model is built on a single residual UNet backbone with a maximum channel width of 512, resulting in a lightweight yet expressive architecture.

\begin{figure*}[!t]
  \centering
\includegraphics[width=0.95\textwidth]{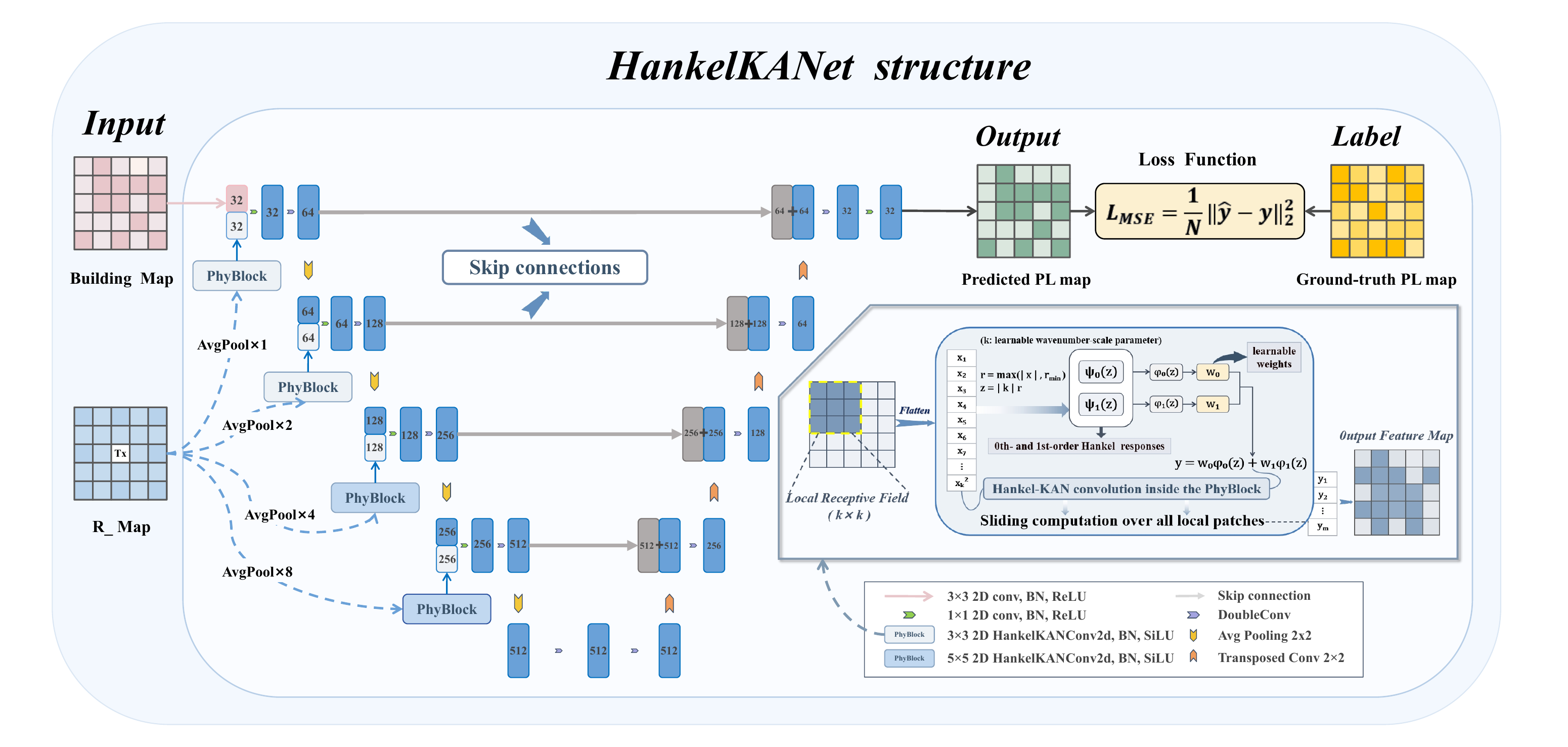}
  \caption{Overall architecture of the proposed HankelKANet. The network processes a streamlined two-channel input comprising a base station distance map (R\_Map) and a building binary map. It is built upon a single residual UNet backbone with a maximum channel count of 512. The encoder-decoder structure utilizes skip connections to recover fine details in boundary and blockage regions, while the physical branches (PhyBlocks) embed distance-dependent propagation priors via 2D HankelKAN convolutions. The network finally outputs the predicted PL map.}
  \label{fig:architecture}
\end{figure*}

\subsection{Dual-Branch Encoder Design}

\subsubsection{Environment Branch}

The environment branch takes the building layout map $B$ as input and extracts structural representations induced by the urban geometry. It is implemented by a hierarchical convolutional encoder that progressively downsamples the spatial resolution while increasing the channel dimension. In this way, the network captures both local building boundaries and broader scene-level contextual information. The intermediate features are preserved and forwarded to the decoder through skip connections for later detail recovery.

\subsubsection{Physical Branch}
\label{sec:physical_branch}

The physical branch operationalizes the propagation prior established in Section~\ref{sec:channel_model} by explicitly evaluating the primary field envelope. It takes a dimensionless distance map $\widetilde{R} = R / L_{\mathrm{ref}} \in [0,1]$ as input, where $R$ is the physical Tx-Rx distance and $L_{\mathrm{ref}}$ is the map's diagonal length. 

Crucially, to prevent the Hankel functions from diverging at their singular point ($R=0$), we lower-bound the evaluated argument as $z = \max\!\left\{ \kappa\widetilde{R}, \, z_{\min} \right\}$. Here, $\kappa$ is a dimensionless learnable radial-frequency parameter, bounded in $[0.1, 10.0]$ via a sigmoid mapping and initialized to $\pi$ to ensure numerical stability. The threshold $z_{\min} = \pi \times 10^{-3}$ restricts evaluation to a physically negligible sub-pixel region ($\widetilde{R} \ge 10^{-3}$), preventing gradient explosion caused by the divergence of the Neumann functions during backpropagation without distorting macroscopic features.

This regularized map is then processed by the proposed Hankel-KAN convolutional layer. Given an input feature map $\mathbf{X}$ and a kernel of size $m\times n$, the Conv-KAN operation is defined as:
\begin{equation}
(\mathbf{X}\!*\!\boldsymbol{\varphi})_{i,j}\;=\; \sum_{u=1}^{m}\sum_{v=1}^{n}\varphi_{u,v}\!\left(\mathbf{X}_{i+u-1,\,j+v-1}\right),
\label{eq:convkan}
\end{equation}
where each $\varphi_{u,v}(\cdot)$ is a learnable univariate function associated with the spatial position $(u,v)$. Unlike standard CNNs that apply simple scalar multiplications, Conv-KAN endows each kernel entry with an analytical, physics-inspired basis function.

Specifically, we parameterize each $\varphi_{u,v}(z)$ using a combination of the zeroth- and first-order Hankel responses:
\begin{equation}
\varphi_{u,v}(z) \;=\; w_{0,u,v}\,\phi_{0}(z) \;+\; w_{1,u,v}\,\phi_{1}(z),
\label{eq:phi_uv}
\end{equation}
where $w_{0,u,v}, w_{1,u,v} \in \mathbb{R}$ are trainable modal weights that adaptively fuse the macroscopic energy decay and the dominant spatial gradients of the primary wave. The parameter $\kappa$ (embedded within $z$) allows the network to dynamically adjust the fundamental spatial decay rate to varying propagation environments. 

Following the magnitude-only envelope extraction established in Section~\ref{sec:channel_model}, the radial basis functions $\phi_{\nu}$ ($\nu \in \{0, 1\}$) are explicitly constructed as log-compressed magnitudes. This nonlinear mapping effectively compresses the massive dynamic range of the electromagnetic field:
\begin{equation}
\phi_{\nu}(z) \;=\; \log\!\left(1 + |H_{\nu}^{(1)}(z)|\right) \;=\; \log\!\left(1 + \sqrt{J_{\nu}(z)^{2}+Y_{\nu}(z)^{2}}\right).
\label{eq:hankel_log}
\end{equation}

Consequently, the physical branch yields a structured spatial prior of the unperturbed primary field. These macroscopic features are subsequently fused with the environment branch to implicitly resolve complex secondary scattering interactions.

\subsection{Multi-Scale Feature Fusion and Decoder}

The features extracted by the environment and physical branches are fused at four progressive scales. At the first scale, the building layout map $B$ and the distance-prior map $\widetilde{R}$ are processed by an initial environment block (EnvStem) and a physical block (PhyStem), respectively. The feature extraction at this initial scale utilizes $3\times3$ convolution kernels, producing the corresponding 32-channel feature maps $E_1$ and $P_1$. These two feature maps are then concatenated and passed through a fusion layer, which can be written as
\begin{equation}
F_1=\mathcal{F}_1([E_1;P_1]), \qquad \mathcal{F}_1(\cdot)=\mathrm{Conv}^{(1)}_{1\times1}(\cdot),
\end{equation}
where $[\cdot\,;\,\cdot]$ denotes channel-wise concatenation. After this $1\times1$ fusion, a double-convolution block, consisting of a $3\times3$ convolution and a $5\times5$ convolution (each followed by batch normalization and SiLU activation), is applied to further transform the joint representation and expand the channel dimension to 64. The resulting feature $F_1$ is then downsampled and forwarded to the next scale.

For scales $s=2,3,4$, the environmental features $E_s$ are obtained by downsampling the fused features from the previous stage. Concurrently, the physical branch performs scale-matched average pooling on the original distance-prior map $\widetilde{R}$ and then applies Hankel-KAN convolutions (PhyBlock) to generate the propagation-aware features $P_s$. Specifically, the physical branch dynamically adapts its kernel sizes, using a $3\times3$ Hankel-KAN convolution at the second scale, and larger $5\times5$ Hankel-KAN convolutions at the third and fourth scales, producing 64-, 128-, and 256-channel feature maps, respectively. At each scale, the environmental and physical features are concatenated and fused through a $1\times1$ convolution, followed again by the aforementioned double-convolution block (i.e., one $3\times3$ and one $5\times5$ convolution) to refine the representation and double the channel dimension. In this way, the fused features progressively encode both scene-structure information and propagation-aware priors across multiple resolutions.

The fused representations at different scales are not only passed to deeper encoder stages, but also delivered to the decoder through skip connections. The decoder is symmetric to the encoder: it progressively upsamples the bottleneck feature and concatenates it with the corresponding encoder outputs to recover spatial resolution and reconstruct fine-grained path loss structures, especially around blockage boundaries and transition regions. As the spatial resolution increases, the channel dimension is reduced accordingly. Finally, a $1\times1$ convolution maps the decoded features to a single-channel continuous-valued output, yielding the predicted path loss map.

\subsection{End-to-End Training Strategy}

By embedding propagation-related priors directly into the network architecture through Hankel-KAN convolutions, the proposed method can be trained end-to-end using only the MSE objective (Eq.~\ref{eq:loss}). This design avoids the additional loss terms and manual coefficient tuning often required in hybrid formulations, while still enabling the model to capture both distance-dependent attenuation and environment-related spatial variations within a unified framework. The overall training procedure is summarized in Algorithm~\ref{alg:training}.

\begin{algorithm}[!t]
\label{alg:training}
\caption{Training procedure of HankelKANet}
\Input{
    Training dataset $\mathcal{D}_{\text{train}}$; validation dataset $\mathcal{D}_{\text{val}}$; Batch format: $(B,R,Y)$; model $f_0$; number of training epochs $E$; learning rate $\eta$.
}
\Output{Best model parameters $\theta^*$.}

\BlankLine
Initialize model parameters $\theta$ and the optimizer\;
\For{$\text{epoch} \leftarrow 1$ \KwTo $E$}{
    \ForEach{$(B,R,Y) \in \mathcal{D}_{\text{train}}$}{
        $E_1 \leftarrow \text{EnvStem}(B)$\;
        {\Comment{Process building map $B$ with EnvStem to get initial env features $E_1$}}
        $P_1 \leftarrow \text{PhyStem}(R)$\;
        {\Comment{Process distance map $R$ with PhyStem to get initial phy features $P_1$}}
        $F_1 \leftarrow \text{Enc}(\text{Fuse}(E_1,P_1))$\;
        {\Comment{Fuse $E_1$ and $P_1$, then encode to get first fused features $F_1$}}
        
        \For{$s \leftarrow 2$ \KwTo $S$}{
            $E_s \leftarrow \text{Pool}(F_{s-1})$\;
            {\Comment{Pool previous fused features $F_{s-1}$ for downsampling}}
            $P_s \leftarrow \text{PhyBranch}(\text{Pool}^{(s)}(R))$\;
            {\Comment{Multi-scale pool $R$, then phy branch to get $P_s$}}
            $F_s \leftarrow \text{Enc}(\text{Fuse}(E_s,P_s))$\;
            {\Comment{Fuse $E_s$ and $P_s$, then encode to get level $s$ fused features $F_s$}}
        }
        
        $F_b \leftarrow \text{Bottleneck}(\text{Pool}(F_S))$\;
        {\Comment{Pool deepest fused features $F_S$, then bottleneck to get $F_b$}}
        $D_S \leftarrow \text{Dec}(\text{Concat}(\text{Up}(F_b),F_S))$\;
        {\Comment{Upsample $F_b$, concat with $F_S$, then decode to get deepest decode features $D_S$}}
        
        \For{$s \leftarrow S$ \KwTo $2$}{
            $D_{s-1} \leftarrow \text{Dec}(\text{Concat}(\text{Up}(D_s),F_{s-1}))$\;
            {\Comment{Upsample current decode $D_s$, skip-connect with $F_{s-1}$, then decode to get $D_{s-1}$}}
        }
        
        $\hat{Y} \leftarrow \text{Conv}_{1\times1}(D_1)$\;
        Compute loss $\mathcal{L}_{\text{MSE}}$ according to Eq.~\ref{eq:loss}\;
        Update $\theta$ using the optimizer based on $\mathcal{L}_{\text{MSE}}$\;
    }
    Evaluate the model on $\mathcal{D}_{\text{val}}$\;
    \If{$\mathcal{L}_{\text{val}} < \mathcal{L}_{\text{val}}^{\text{best}}$}{
        $\mathcal{L}_{\text{val}}^{\text{best}} \leftarrow \mathcal{L}_{\text{val}}$\;
        $\theta^* \leftarrow \theta$\;
    }
}
\end{algorithm}

\section{Experimental Validation}
\label{sec:experiment}

This section evaluates the predictive accuracy, architectural contributions, and computational cost of HankelKANet. RadioMapSeer~\cite{levie2020radiomapseer} serves as the primary simulation benchmark, while Huawei RSRPSet~\cite{rsrpset16cpgmcm} provides a complementary assessment of predictive performance on real-world measurements. All experiments in this section follow a within-dataset protocol, with models trained, selected, and tested using the respective partitions of each dataset. Synthetic-to-real transfer and cross-dataset generalization are examined separately in Section~\ref{sec:analysis}. 

\subsection{Experimental Setup}
\label{subsec:experimental_setup}

\subsubsection{Dataset Description}

RadioMapSeer is a ray-tracing benchmark for urban PL prediction comprising 700 scenarios, each with 80 transmitter locations (inputs/outputs shown in Fig.~\ref{fig:example_maps}). To ensure generalization, the dataset is split at the scenario level, holding out 8,000 maps for testing. PL values are normalized to $[0,1]$ during training and converted back to dB for evaluating valid outdoor pixels. All baselines use identical partitions and evaluation masks.

Huawei RSRPSet contains real-world received power measurements, capturing practical propagation effects and measurement noise often missed by simulations. Here, we specifically compare HankelKANet against the physics-guided PEFNet. Both models are trained from scratch under matched data budgets and evaluated on a common held-out test set, computing errors strictly at valid measurement locations.

\begin{figure*}[!t]
  \centering
  \includegraphics[width=0.145\textwidth]
  {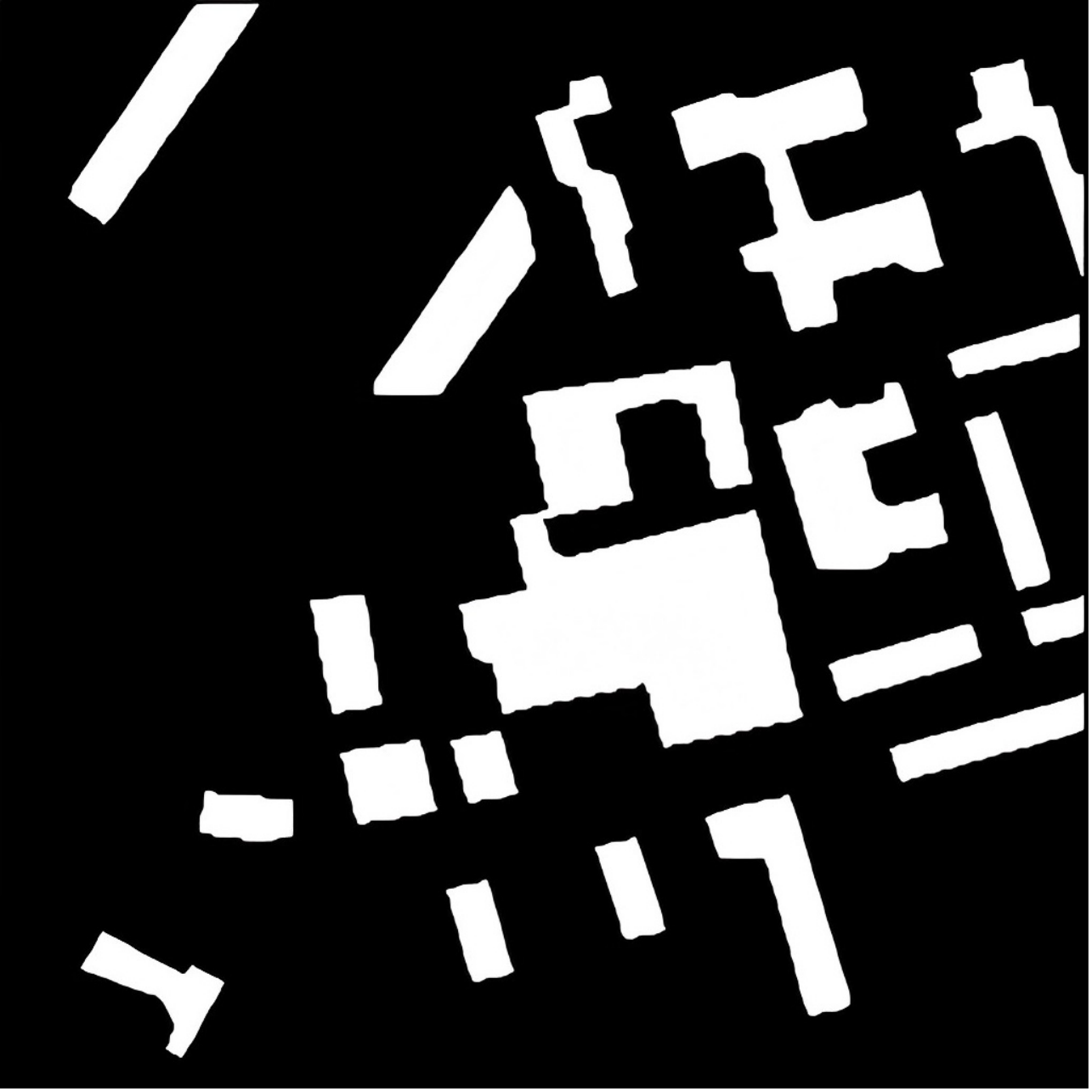}\hspace{0.01\textwidth}
  \includegraphics[width=0.145\textwidth]
  {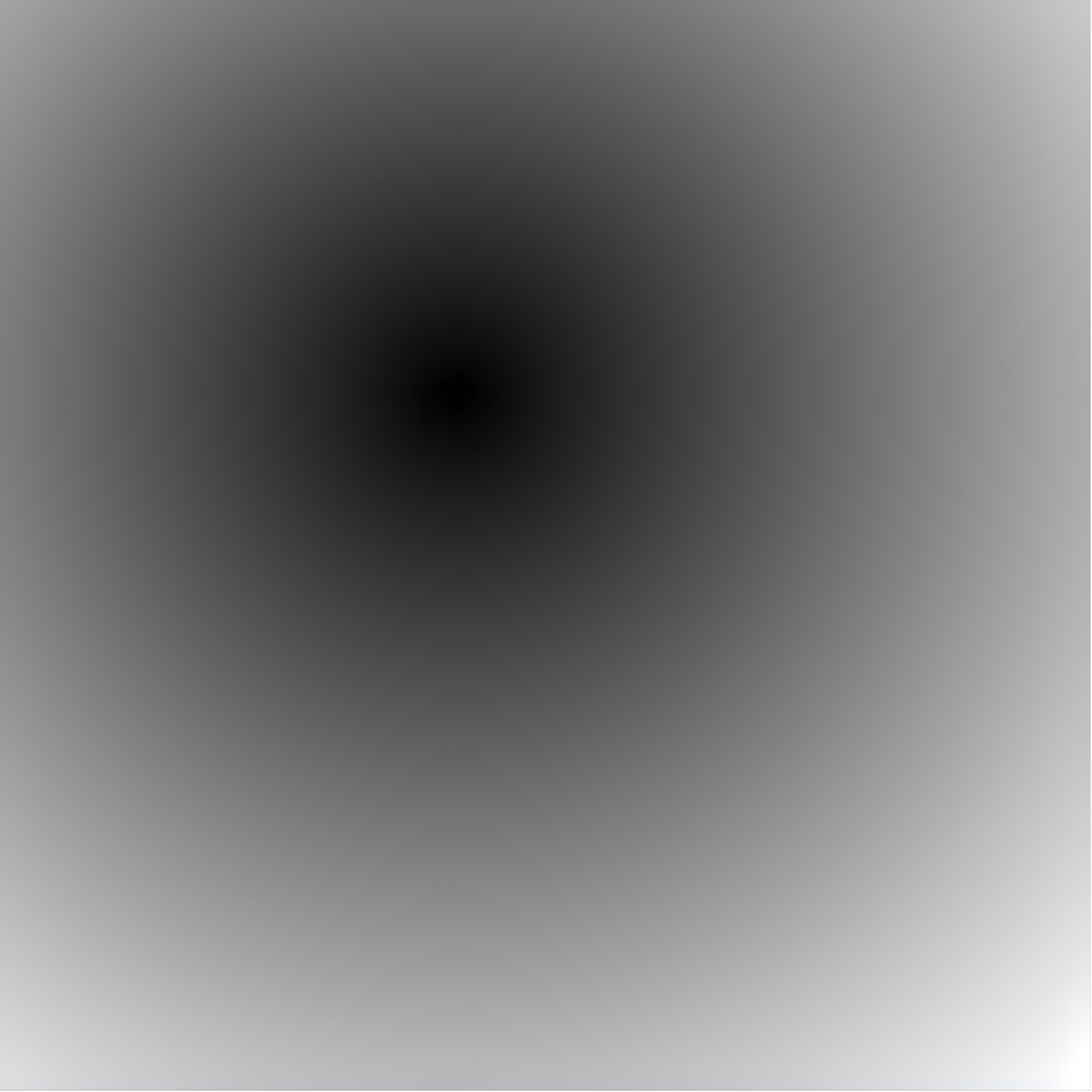}\hspace{0.01\textwidth}
  \includegraphics[width=0.145\textwidth]
  {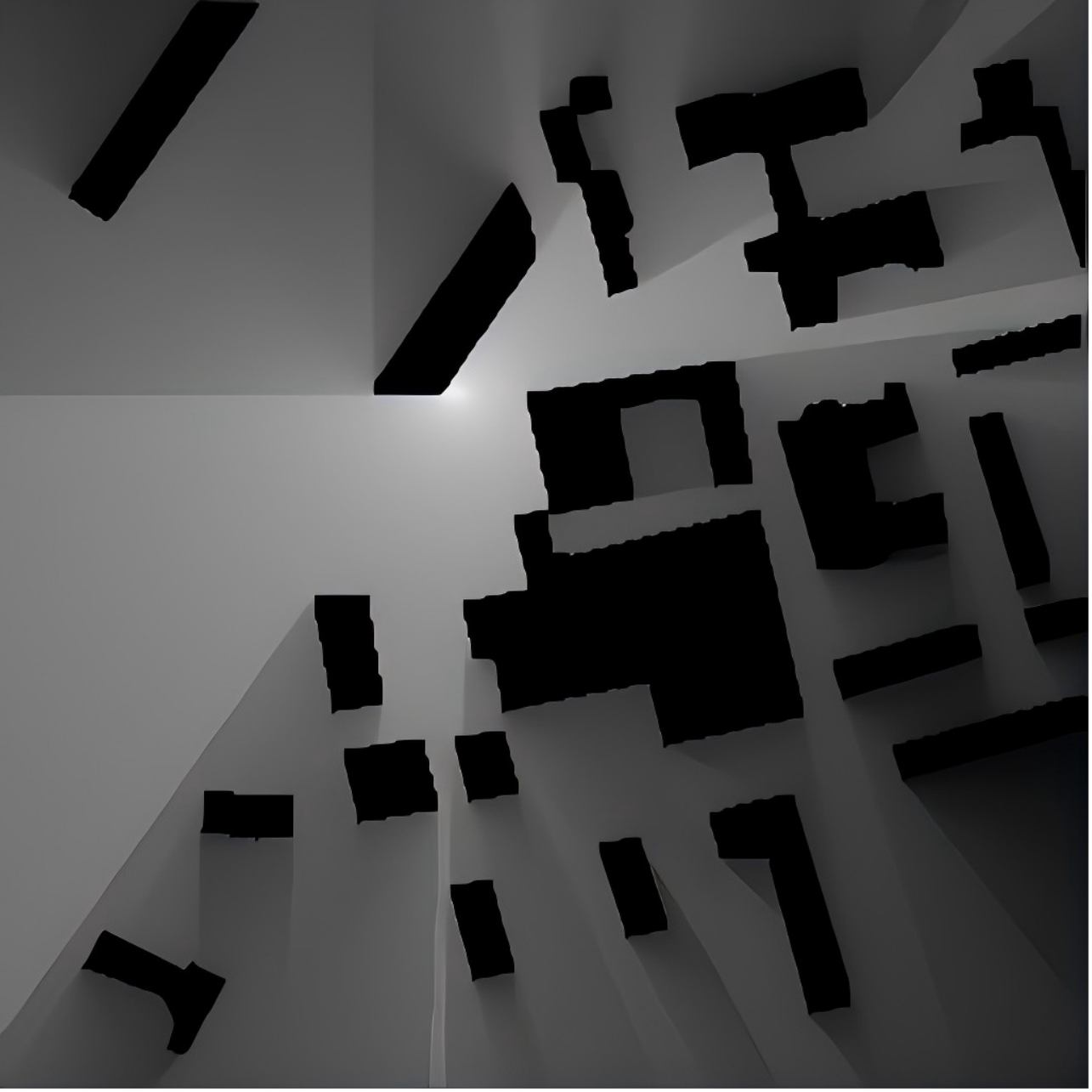}\hspace{0.02\textwidth}
  \includegraphics[width=0.145\textwidth]
  {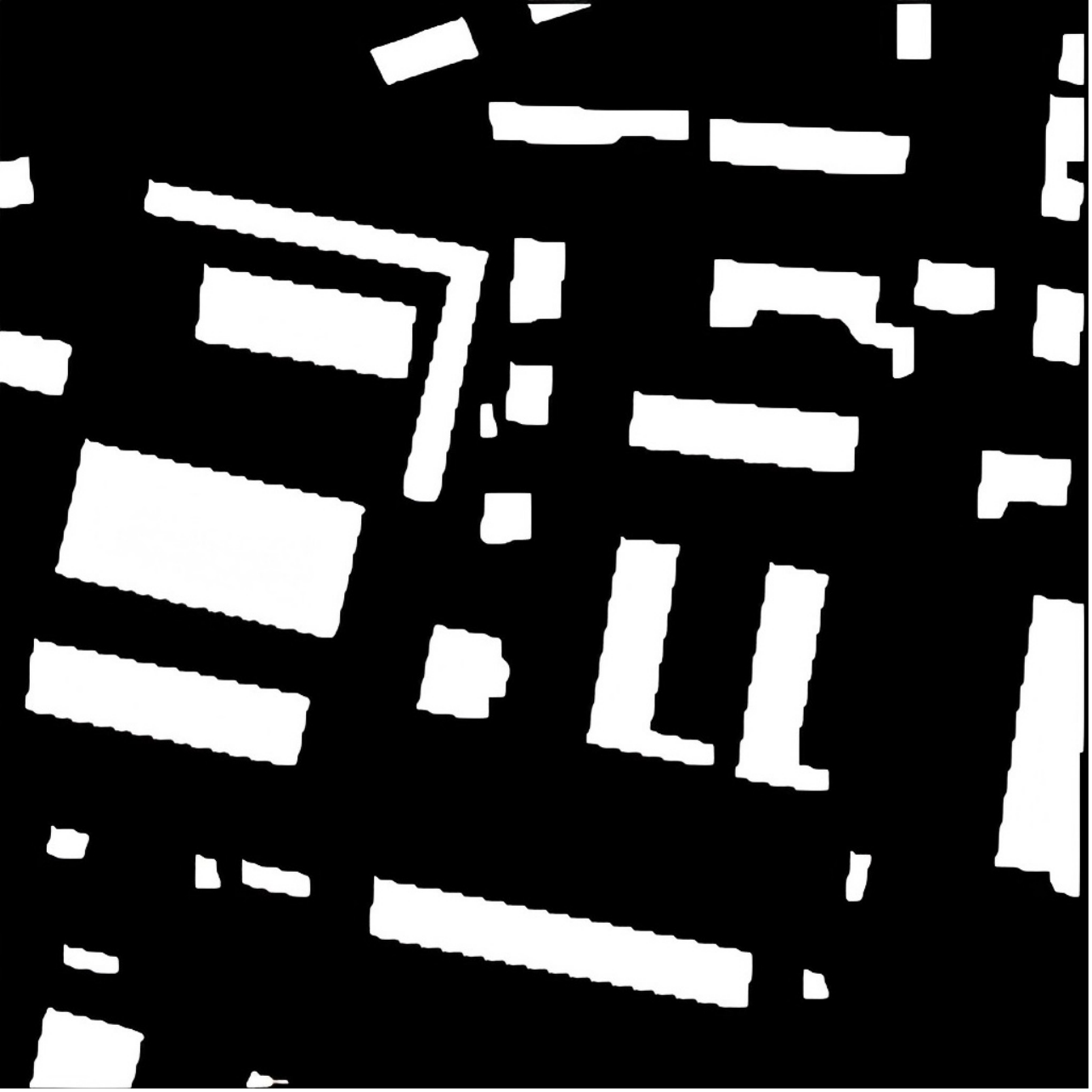}\hspace{0.01\textwidth}
  \includegraphics[width=0.145\textwidth]
  {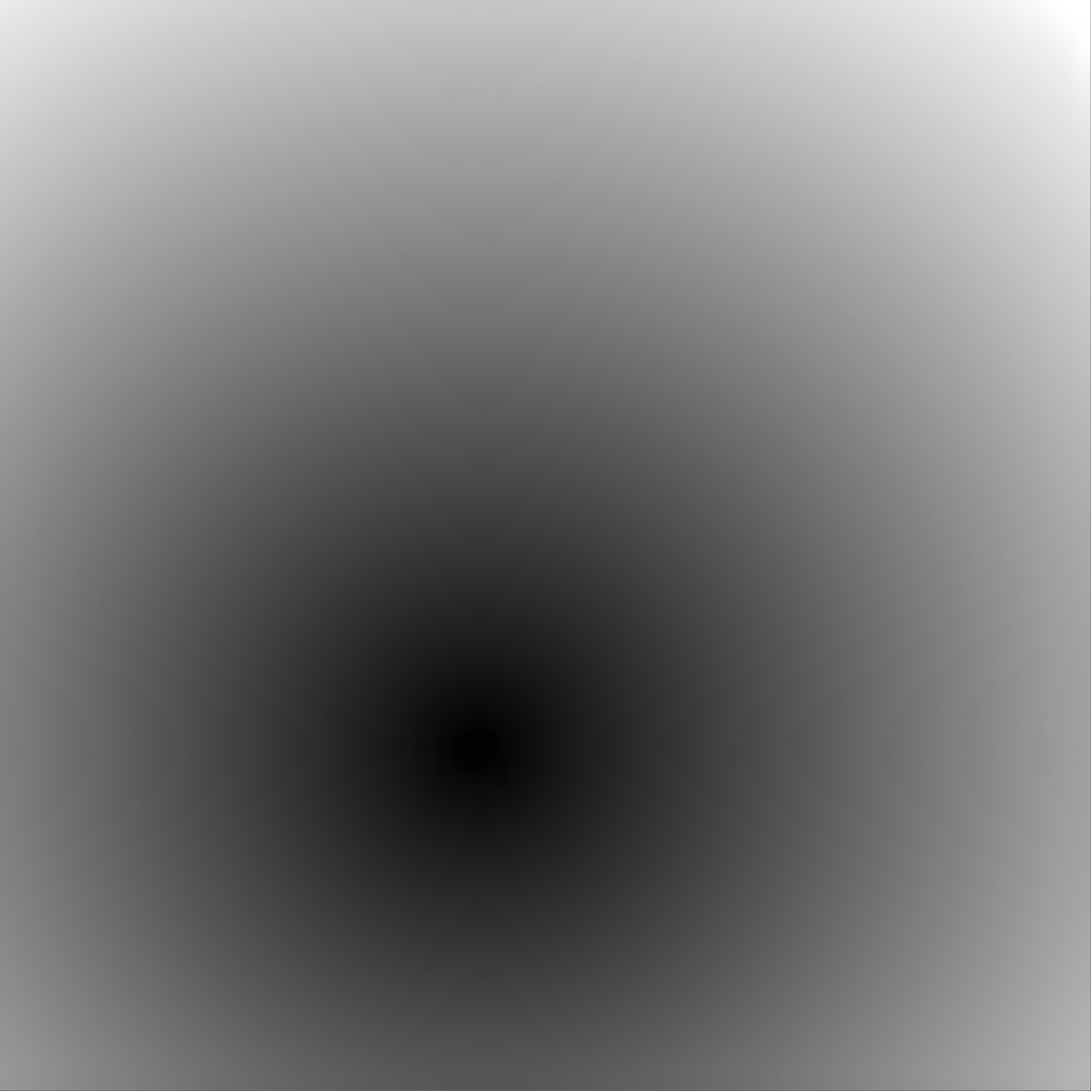}\hspace{0.01\textwidth}
  \includegraphics[width=0.145\textwidth]
  {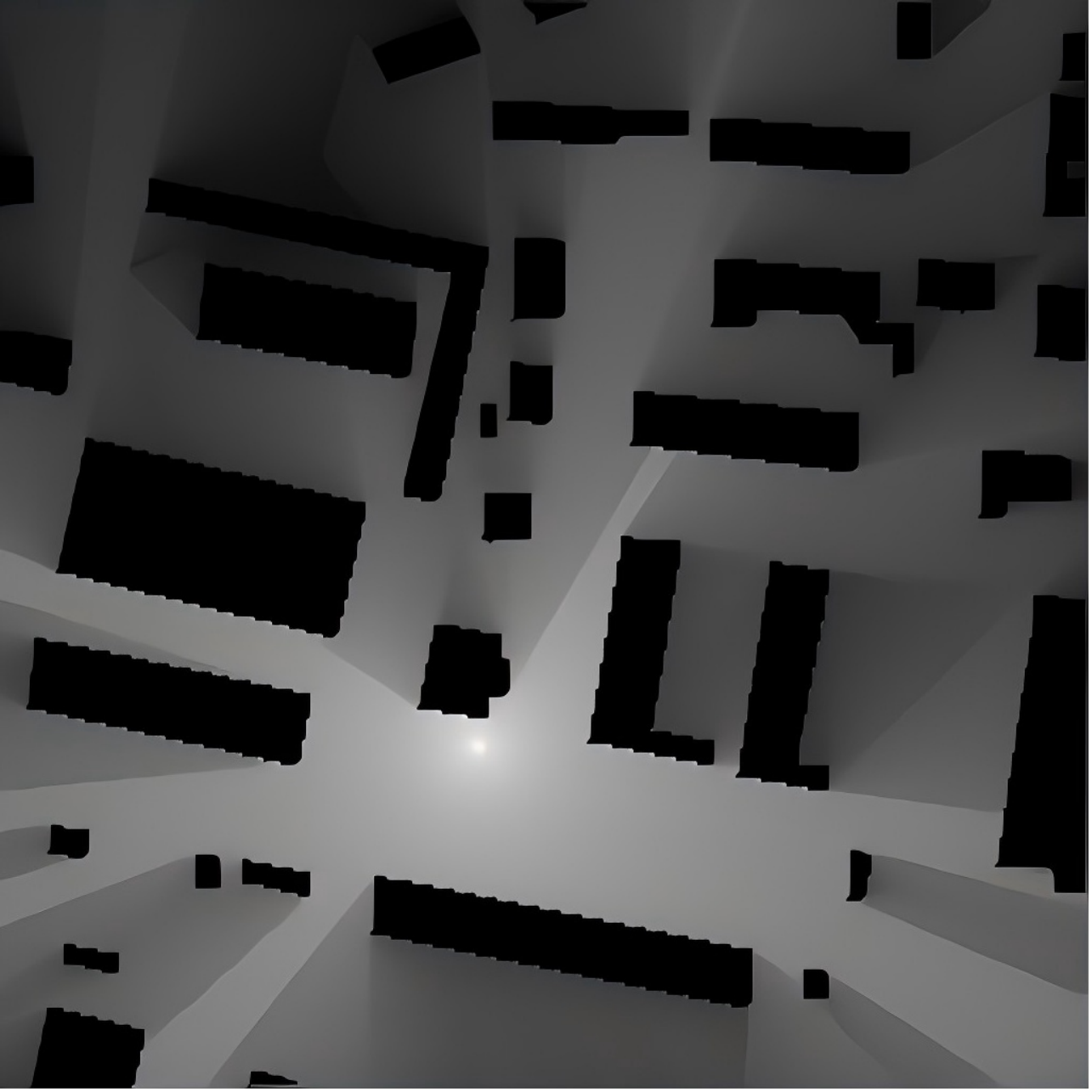}
  \caption{Examples of input and output maps from RadioMapSeer. From left to right: building layout, distance prior, and pathloss map for two representative scenarios.}
  \label{fig:example_maps}
\end{figure*}

\subsubsection{Evaluation Metrics}

We evaluate prediction performance using mean absolute error (MAE), root mean square error (RMSE), and the coefficient of determination ($R^2$). Both MAE and RMSE are reported in dB after inverse normalization. Specifically, MAE reflects the average absolute deviation, RMSE imposes a heavier penalty on large prediction errors, and a higher $R^2$ signifies better overall agreement with the reference values.

For a single test path loss map, the MAE, RMSE, and $R^2$ are computed over its valid evaluation pixels as follows:
\begin{subequations}
\label{eq:metrics}
\begin{align}
\mathrm{MAE}
&= \frac{1}{m}
\sum_{i=1}^{m}
|\mathrm{PL}_i-\hat{\mathrm{PL}}_i|,
\label{eq:MAE}\\
\mathrm{RMSE}
&= \sqrt{
\frac{1}{m}
\sum_{i=1}^{m}
(\mathrm{PL}_i-\hat{\mathrm{PL}}_i)^2},
\label{eq:RMSE}\\
R^2
&= 1-
\frac{
\sum_{i=1}^{m}
(\mathrm{PL}_i-\hat{\mathrm{PL}}_i)^2
}{
\sum_{i=1}^{m}
(\mathrm{PL}_i-\overline{\mathrm{PL}})^2
},
\label{eq:R2}
\end{align}
\end{subequations}
where $m$ is the number of valid evaluation pixels, $\mathrm{PL}_i$ and $\hat{\mathrm{PL}}_i$ denote the reference and predicted path loss values at pixel $i$, respectively, and $\overline{\mathrm{PL}}=\frac{1}{m}\sum_{i=1}^{m}\mathrm{PL}_i$ is the mean reference
path loss over these pixels. The reported test-set results are obtained by averaging each metric across all test maps, with equal weight assigned to each map.

\subsubsection{Training Settings and Reproducibility}

All experiments are implemented in Python~3.12 and run on a single NVIDIA RTX~5880 Ada GPU with 48~GB memory. HankelKANet is optimized using AdamW with an initial learning rate of $4\times10^{-3}$, a weight decay of $1\times10^{-4}$, cosine annealing, and a batch size of 32. The network is trained end-to-end using the MSE objective, without an additional physics-residual loss or manual weighting between data and physics losses. Training proceeds for up to 250 epochs, utilizing validation-based checkpoint selection and an early-stopping patience of 50 epochs.

To guarantee a strictly fair comparison, all baselines are reproduced within our unified evaluation pipeline to align perfectly with HankelKANet's inputs and training budgets. Furthermore, all models are evaluated over five independent runs (seeds 0--4) using fixed dataset partitions, with results reported as mean $\pm$ standard deviation. Computational complexity is evaluated on the aforementioned GPU using a batch size of one, a $256\times256$ resolution, and FP32 precision. For reproducibility, source code and further implementation details are available at \url{https://github.com/chengxx0317-lgtm/HankelKANet}.

\subsection{Comparison with Baselines}
\label{subsec:baseline_comparison}

We compare HankelKANet with representative empirical, deterministic, and learning-based methods, including 3GPP TR~36.873~\cite{3gpp36873}, ray tracing~\cite{yun2015ray}, decision tree regression (DTR)~\cite{zhang2019path}, $k$-nearest neighbors (KNN)~\cite{gomes2018empirical}, artificial neural networks (ANN)~\cite{huang2018big}, RadioUNet~\cite{levie2021radiounet}, and PEFNet. Among these, PEFNet serves as the primary physics-guided baseline, allowing us to compare alternative ways of incorporating propagation information into a trainable prediction model.

Note that the ray-tracing baseline is evaluated as an independently executed solver rather than the ground-truth generator. For runtime comparisons of all learning-based models, we employ a standardized inference protocol on a single GPU with a $256 \times 256$ resolution and a batch size of 1. To isolate the pure network forward-pass latency, the reported times exclude data preprocessing and are averaged over 100 repetitions following 50 warm-up iterations.

\subsubsection{Results on RadioMapSeer}
\label{subsubsec:radiomapseer_results}

Table~\ref{tab:overall_performance_tables} presents the quantitative comparison on the RadioMapSeer test set. Empirical and classical machine learning models (3GPP, DTR, KNN, ANN) exhibit limited accuracy, with RMSE values generally exceeding $10$~dB, indicating their inability to capture fine-grained spatial variations. While the independent ray-tracing solver provides a much lower RMSE of $6.3250$~dB, convolutional neural networks (RadioUNet, PEFNet) demonstrate superior performance by effectively learning spatial propagation priors from the training data.

Among all evaluated methods, HankelKANet achieves the highest $R^2$ ($0.9849$) and the lowest RMSE ($1.8392$~dB). Compared to the leading physics-guided baseline, PEFNet, our method reduces the RMSE by $16.96\%$ (from $2.2148$ to $1.8392$~dB). Interestingly, its MAE ($0.7980$~dB) is marginally higher than that of PEFNet ($0.7852$~dB). Since RMSE penalizes large errors much more heavily than MAE, this specific performance signature, a significant drop in RMSE coupled with a comparable MAE, demonstrates that HankelKANet is highly effective at suppressing severe, catastrophic prediction errors (e.g., in deep shadow regions), rather than merely pursuing a uniform reduction in absolute error across easily predictable pixels.

\begin{table*}[!t]
\centering
\caption{Quantitative performance comparison of different methods on the RadioMapSeer test set.}
\label{tab:overall_performance_tables}
\setlength{\tabcolsep}{14pt}
\renewcommand{\arraystretch}{1.65}
\begin{tabular}{lccc}
\toprule
\textbf{Method} & $\mathbf{R^2}$ & \textbf{MAE (dB)} & \textbf{RMSE (dB)} \\
\midrule
3GPP TR 36.873 \cite{3gpp36873} &0.2552 & 11.5416 & 14.7590 \\
ray tracing \cite{yun2015ray} & 0.8594 & 4.7591 & 6.3250 \\
DTR \cite{zhang2019path} & 0.4733 ($\pm$ 0.0004) & 8.7962 ($\pm$ 0.0063) & 12.1251 ($\pm$ 0.0104) \\
KNN \cite{gomes2018empirical} & 0.5808 ($\pm$ 0.0002) & 7.8994 ($\pm$ 0.0095) & 10.8649 ($\pm$ 0.0053) \\
ANN \cite{huang2018big} & 0.6303 ($\pm$ 0.0034) & 7.4167 ($\pm$ 0.0027) & 10.1688 ($\pm$ 0.0029) \\
RadioUNet \cite{levie2021radiounet} & 0.9616 ($\pm$ 0.0002) & 1.4747 ($\pm$ 0.0180) & 2.8748 ($\pm$ 0.0305) \\
PEFNet \cite{jiang2024physics} & 0.9706 ($\pm$ 0.0002) & \textbf{0.7852 ($\pm$ 0.0317)} & 2.2148 ($\pm$ 0.0492) \\
\midrule
\textbf{HankelKANet} & \textbf{0.9849 ($\pm$ 0.0002)} & 0.7980 ($\pm$ 0.0104) & \textbf{1.8392 ($\pm$ 0.0300)} \\
\bottomrule
\end{tabular}
\end{table*}

Aggregate metrics can be dominated by easily predictable pixels (e.g., short-range LOS areas), masking prediction failures in heavily obstructed or distant regions that are critical for network planning. To rigorously evaluate model robustness, Table~\ref{tab:stratified_radiomapseer} provides a stratified analysis. To ensure objectivity, the stratification thresholds are strictly data-driven, derived from training-set percentiles for link distance (short $\leq 72.80$~m, long $\geq 147.47$~m) and building density ($\geq 30.80\%$), while LOS/NLOS pixels are deterministically identified via geometric intersections.

As shown, HankelKANet maintains highly reliable estimates even in these adverse environments. While predictions in LOS and short-distance regions are remarkably accurate (RMSE $< 1.4$~dB), the model demonstrates strong robustness in NLOS regions (RMSE $2.2506$~dB) and high-density urban layouts (RMSE $2.2229$~dB). Although long-range links remain the most challenging subset ($3.1182$~dB), these stratified results confirm that the low-order Hankel representation effectively captures complex composite propagation phenomena, preserving high accuracy even under severe obstruction.

\begin{table*}[!t]
\centering
\caption{Stratified performance of HankelKANet with Hankel
orders $\{0,1\}$ against RadioMapSeer reference maps.}
\label{tab:stratified_radiomapseer}
\setlength{\tabcolsep}{7pt}
\renewcommand{\arraystretch}{1.25}
\begin{tabular}{lrrrrr}
\toprule
\textbf{Condition} &
\textbf{Radio maps} &
\textbf{Evaluated pixels} &
$\mathbf{R^2}$ &
\textbf{MAE (dB)} &
\textbf{RMSE (dB)} \\
\midrule
Overall outdoor test set &
8,000 & 397,944,539 & 0.9849 & 0.7980 & 1.8392 \\
LOS &
8,000 & 125,925,669 & 0.9885 & 0.4614 & 1.3884 \\
NLOS &
8,000 & 272,018,870 & 0.9820 & 1.2583 & 2.2506 \\
Short distance ($\leq72.80$~m) &
8,000 & 99,342,877 & 0.9910 & 0.3809 & 0.9827 \\
Long distance ($\geq147.47$~m) &
8,000 & 99,980,657 & 0.9633 & 1.9792 & 3.1182 \\
High building density ($\geq30.80\%$) &
2,160 & 89,968,345 & 0.9715 & 1.1780 & 2.2229 \\
\bottomrule
\end{tabular}
\end{table*}

\subsubsection{Results on Huawei RSRPSet}
\label{subsubsec:huawei_results}

To evaluate performance under real-world measured supervision, we deploy the models on the Huawei RSRPSet. In this section, we focus the comparison exclusively between HankelKANet and PEFNet---the strongest physics-guided baseline. Other empirical and classical machine learning methods yield substantially higher errors on this challenging real-world dataset and are therefore omitted for brevity. 

Table~\ref{tab:huawei_performance} presents the within-dataset evaluation, comparing models trained from scratch under varying fractions of target data ($\eta$). HankelKANet consistently outperforms PEFNet in suppressing large prediction errors across all data regimes. Under extremely sparse supervision ($\eta=1\%$), our model reduces the RMSE by over $0.42$~dB. With full supervision ($\eta=100\%$), HankelKANet achieves the lowest RMSE ($3.7926$~dB) and the highest $R^2$ ($0.9170$). Notably, its MAE ($1.7428$~dB) is marginally higher than that of PEFNet ($1.6865$~dB). This specific performance signature, superior RMSE coupled with comparable MAE, is perfectly consistent with the observations on RadioMapSeer (Section~\ref{subsubsec:radiomapseer_results}). It reaffirms that the compact Hankel representation excels at mitigating severe outlier errors in noisy, measurement-based real-world data, rather than overfitting to uniform absolute deviations. Further analysis regarding cross-dataset generalization is deferred to Section~\ref{sec:analysis}.

\begin{table}[!t]
\centering
\caption{Within-dataset performance comparison on the Huawei RSRPSet (Training from scratch and full real supervision).}
\label{tab:huawei_performance}
\setlength{\tabcolsep}{4.5pt}
\renewcommand{\arraystretch}{1.2}
\begin{tabular}{llccc}
\toprule
\textbf{Budget ($\eta$)} & \textbf{Method} & $\mathbf{R^2}$ & \textbf{MAE (dB)} & \textbf{RMSE (dB)} \\
\midrule
\multicolumn{5}{c}{\textit{Training from Scratch}} \\
\midrule
\multirow{2}{*}{1\%} 
& PEFNet & 0.6448 & 4.7843 & 7.8452 \\
& HankelKANet & \textbf{0.6896} & \textbf{4.4375} & \textbf{7.4218} \\
\midrule
\multirow{2}{*}{5\%} 
& PEFNet & 0.7620 & 3.5207 & 6.4211 \\
& HankelKANet & \textbf{0.7983} & \textbf{3.2884} & \textbf{5.9126} \\
\midrule
\multirow{2}{*}{10\%} 
& PEFNet & 0.8340 & 2.8129 & 5.3628 \\
& HankelKANet & \textbf{0.8593} & \textbf{2.6792} & \textbf{4.9364} \\
\midrule
\multirow{2}{*}{20\%} 
& PEFNet & 0.8792 & 2.2736 & 4.5762 \\
& HankelKANet & \textbf{0.8894} & \textbf{2.1846} & \textbf{4.3127} \\
\midrule
\multicolumn{5}{c}{\textit{Full Real Supervision}} \\
\midrule
\multirow{2}{*}{100\%} 
& PEFNet & 0.9117 & \textbf{1.6865} & 3.9124 \\
& HankelKANet & \textbf{0.9170} & 1.7428 & \textbf{3.7926} \\
\bottomrule
\end{tabular}
\end{table}

\subsection{Ablation Studies on Architecture and Physical Priors}
\label{subsec:ablation}

To systematically investigate the architectural components and the physical representation choices of HankelKANet, we conduct controlled evaluations on the RadioMapSeer benchmark.

\subsubsection{Architectural Components}

Table~\ref{tab:arch_ablation} details the impact of removing or replacing key structural components. 

Removing the physics-guided branch entirely produces a severe degradation, increasing the RMSE from $1.8392$~dB to $9.5872$~dB. This highlights the vital role of the transmitter-dependent propagation prior. Similarly, removing the U-Net skip-connected module increases the RMSE to $4.0146$~dB, confirming that the encoder-decoder feature fusion is crucial for recovering localized spatial details such as sharp shadowing transitions.

We also evaluate the specific parameterization of the physical branch. Replacing the Hankel amplitude basis with a standard B-spline basis increases the RMSE to $2.7958$~dB, demonstrating that the Bessel-derived functions provide a more effective inductive bias for electromagnetic wave propagation. Furthermore, fixing the spatial scale parameter $k$ instead of making it learnable increases the RMSE to $2.1594$~dB. A trainable $k$ allows the network to adaptively adjust the effective radial scale in the normalized feature domain, rather than being strictly bound to the physical free-space wavenumber.

\begin{table}[!t]
\centering
\caption{Architectural component ablation study on RadioMapSeer.}
\label{tab:arch_ablation}
\setlength{\tabcolsep}{6pt}
\renewcommand{\arraystretch}{1.2}
\begin{tabular}{lccc}
\toprule
\textbf{Ablation Variant} & $\mathbf{R^2}$ & \textbf{MAE (dB)} & \textbf{RMSE (dB)} \\
\midrule
w/o Physical Branch & 0.6783 & 6.7468 & 9.5872 \\
Hankel replaced by B-spline & 0.8734 & 1.7899 & 2.7958 \\
w/o skip-connected module & 0.9445 & 2.1316 & 4.0146 \\
with non-trainable $k$ & 0.8833 & 1.1926 & 2.1594 \\
\midrule
\textbf{HankelKANet (Proposed)} & \textbf{0.9849} & \textbf{0.7980} & \textbf{1.8392} \\
\bottomrule
\end{tabular}
\end{table}

\subsubsection{Hankel Orders and Angular Factors}
\label{subsec:Hankel_Orders}

To systematically determine the appropriate truncation orders and the necessity of explicit complex-field modeling, we compare varying Hankel configurations in Table~\ref{tab:physics_ablation}.

We evaluate subsets of radial orders $\mathcal{I}$. Restricting the representation to a single mode (either $\{0\}$ or $\{1\}$) degrades the RMSE noticeably ($>3.10$~dB), indicating that both the primary radial decay and its rate-of-change baseline are highly beneficial. Conversely, adding higher-order modes (e.g., $\{0,1,2\}$) yields negligible accuracy gains ($\Delta\mathrm{RMSE} = -0.0016$~dB) while increasing latency by $8.1\%$. Extending to $\{0,1,2,3\}$ actually degrades performance ($1.8662$~dB) due to parameter redundancy and optimization overhead on high-frequency oscillations. Thus, the joint low-order representation $\mathcal{I}=\{0,1\}$ serves as a highly effective configuration.

We further examine whether complete Tx-centered angular factors ($\exp(\mathrm{i}n\varphi)$) provide benefits over pure radial amplitudes. For each mode $n$, we expand the complete real and imaginary channels using Bessel functions combined with trigonometric angular terms. As shown in Table~\ref{tab:physics_ablation}, explicit angular coordinates yield marginal accuracy improvements ($\le 0.003$~dB for $|n|\le 1$) but increase runtime by up to $28\%$. Because environmental blockage primarily originates from localized building edges rather than circular wavefronts centered at the transmitter, the parallel convolutional branch already extracts these angular variations efficiently. Furthermore, since the supervision target is scalar signal power rather than complex field data, explicit phase reconstruction is largely redundant. 

Ultimately, these ablations empirically support that radial amplitudes with $\mathcal{I}=\{0,1\}$ without explicit phase modeling offers a favorable trade-off between expressive fidelity, physical adherence, and computational efficiency.

\begin{table*}[!t]
\centering
\caption{Ablation of physical representation choices (mean over 5 runs). $\mathcal{I}$ denotes the set of utilized Hankel orders.}
\label{tab:physics_ablation}
\setlength{\tabcolsep}{4pt}
\renewcommand{\arraystretch}{1.2}
\begin{tabular}{lccccc}
\toprule
\textbf{Representation Choice} & \textbf{Orders ($\mathcal{I}$) / Mode $n$} & $\mathbf{R^2}$ & \textbf{RMSE (dB)} & \textbf{FLOPs (G)} & \textbf{Latency (s)} \\
\midrule
\multicolumn{6}{c}{\textit{Radial Envelope Only (Amplitude)}} \\
\midrule
Zeroth-order only & $\mathcal{I}=\{0\}$ & 0.8559 & 3.1025 & 91.48 & 0.95 \\
First-order only & $\mathcal{I}=\{1\}$ & 0.8352 & 3.6856 & 91.48 & 0.97 \\
\textbf{Proposed Variant} & \textbf{$\mathcal{I}=\{0,1\}$} & \textbf{0.9849} & \textbf{1.8392} & \textbf{91.58} & \textbf{1.11} \\
Second-order added & $\mathcal{I}=\{0,1,2\}$ & 0.9851 & 1.8376 & 91.66 & 1.20 \\
Third-order added & $\mathcal{I}=\{0,1,2,3\}$ & 0.9845 & 1.8662 & 91.78 & 1.29 \\
\midrule
\multicolumn{6}{c}{\textit{Complete Angular Modes (Real \& Imaginary Expansion)}} \\
\midrule
Complex-field variant & $|n|\le 1$ & 0.9852 & 1.8360 & 91.66 & 1.28 \\
Complex-field variant & $|n|\le 2$ & 0.9850 & 1.8420 & 91.78 & 1.33 \\
Complex-field variant & $|n|\le 3$ & 0.9843 & 1.8750 & 91.90 & 1.42 \\
\bottomrule
\end{tabular}
\end{table*}

\subsection{Complexity and Efficiency Analysis}
\label{subsec:lightweight}

To evaluate computational efficiency, we benchmark HankelKANet against representative learning-based baselines under strictly identical inference conditions, as detailed in Table~\ref{tab:complexity}. Compared to the primary physics-guided baseline, PEFNet (evaluated via an exact local reproduction to ensure matching conditions), HankelKANet reduces the parameter count by $78\%$ (to $26.13$~M) and FLOPs by $81.9\%$ (to $91.58$~G). Peak inference memory is also reduced by over 50\% (from $720$~MB to $340$~MB), significantly lowering the barrier for deployment in resource-constrained edge environments.

Crucially, the end-to-end latency must fully account for the generation of the physical prior. Evaluating the Hankel functions online takes $0.285$~s per map. Even including this pre-processing step, HankelKANet achieves a total inference latency of $1.110$~s/map (0.901 maps/s). This remains strictly faster than both RadioUNet ($1.396$~s) and PEFNet ($1.426$~s) measured under the exact same unbatched protocol. Furthermore, HankelKANet reduces total training time on RadioMapSeer to $7.22$ hours, compared to $9.77$ hours for PEFNet. Given the nearly 3-hour per-map simulation cost of traditional ray tracing, HankelKANet offers an overwhelming efficiency advantage while simultaneously attaining superior predictive accuracy.

\begin{table*}[htbp]
    \centering
    \caption{Complexity comparison evaluated under identical conditions ($256\times256$, batch size 1).}
    \label{tab:complexity}
    \small
    \renewcommand{\arraystretch}{1.1} % 稍微缩减行高以节省空间
    \begin{tabular}{lcccccc}
        \toprule
        \textbf{Method}
        & \textbf{Params (M)}
        & \textbf{FLOPs (G)}
        & \textbf{Size (MB)}
        & \textbf{Peak Mem (MB)}
        & \textbf{Lat. (s)}
        & \textbf{maps/s} \\
        \midrule
        RadioUNet
        & 25.41 & 58.31 & 101.65 & 260 & 1.396 & 0.716  \\
        PEFNet
        & 118.38 & 506.25 & 473.52 & 720 & 1.426 & 0.701 \\
        \midrule
        Ours (Net Only)
        & 26.13 & 91.58 & 104.50 & 260 & 0.812 & 1.232 \\
        Hankel Encoding
        & 0.0005 & N/A$^{a}$ & 0.002 & 120 & 0.285 & 3.509 \\
        HankelKANet
        & \textbf{26.13} & \textbf{$\geq 91.58^{a}$} & \textbf{104.50} & \textbf{340} & \textbf{1.110} & \textbf{0.901} \\
        \bottomrule
    \end{tabular}

    \vspace{1mm}
    \begin{minipage}{0.85\linewidth}
        \footnotesize
       $^{a}$ Standard FLOP profilers only count basic neural network operations and cannot measure the mathematical complexity of special functions (e.g., Bessel functions). Therefore, 91.58 G represents the FLOPs for the network backbone only. The actual computational cost of the Hankel encoding is instead transparently reflected in its wall-clock latency (0.285 s).
    \end{minipage}
\end{table*}

\vspace{2mm}
Ultimately, these evaluations confirm that HankelKANet attains superior predictive accuracy and an overwhelming efficiency advantage. Notably, this performance is validated within a well-defined operating regime targeting macroscopic large-scale path loss for 2D outdoor urban scenarios at sub-6~GHz frequencies. Given link distances up to $\sim$350~m, this macroscopic context naturally justifies the truncation to low-order Hankel modes. Having established the empirical capabilities and operational bounds of the proposed architecture, Section~\ref{sec:analysis} proceeds to analyze its learned spatial representations and cross-dataset generalization.

\section{Impact of Spatial Smoothness on Empirical Generalization}
\label{sec:analysis}

While the previous section established HankelKANet's superior accuracy and lightweight nature, this section uncovers the underlying mechanisms driving this performance. Our analysis serves two primary objectives: First, to explain the root of this excellence by tracing it to physics-inspired inductive biases that inherently enforce spatial smoothness; Second, to demonstrate how this physics-guided regularization prevents overfitting to simulator artifacts, thereby enhancing cross-dataset generalization from simulated environments to real-world measurements.

\subsection{Hankel Operators as an Inductive Bias for Smoothness}

Generic activation functions (e.g., ReLU) and standard KAN basis functions (e.g., B-splines) lack physics-inspired decay constraints. Consequently, as demonstrated in the literature on the spectral bias of neural networks \cite{rahaman2019spectral}, these unconstrained models are highly susceptible to fitting high-frequency noise, making them prone to fabricating non-physical spatial oscillations in unobserved regions. In contrast, the gradient of the Hankel basis function is intrinsically guided by the asymptotic envelope of cylindrical wave equations, naturally mitigating such artifacts.

Consider a single-layer Hankel-KAN mapping acting on a local spatial displacement vector $\mathbf{r}$\footnote{In practical implementations, the input feature is the normalized radial distance $\widetilde{R} \propto \|\mathbf{r}\|_2$. We formulate the analysis using spatial displacement $\mathbf{r}$ to rigorously evaluate the 2D gradient $\nabla_{\mathbf{r}}$.}:
\begin{equation}
\varphi(\mathbf{r}) = w_0 \phi_0(\kappa \|\mathbf{r}\|_2) + w_1 \phi_1(\kappa \|\mathbf{r}\|_2),
\label{eq:single_layer_hankel}
\end{equation}
where $\phi_\nu(z) = \log(1 + |H_\nu^{(1)}(z)|)$ ($\nu \in \{0, 1\}$), $w_0, w_1 \in \mathbb{R}$ are trainable modal weights, and $\kappa \in \mathbb{R}^+$ is a learnable wavenumber.

\begin{proposition}[Parameter-Conditional Spatial Gradient Bound]
\label{prop:hankel_lipschitz}
For any spatial position $\mathbf{r}$ in the propagation regime ($\kappa \|\mathbf{r}\|_2 \gg 1$), the spatial gradient magnitude of $\varphi(\mathbf{r})$ is conditionally bounded by the modal parameters:
\begin{equation}
\|\nabla_{\mathbf{r}} \varphi(\mathbf{r})\|_2 \le \mathcal{C} \cdot \frac{\kappa^{1/2}}{\|\mathbf{r}\|_2^{1/2}} \big( |w_0| + |w_1| \big),
\label{eq:asymptotic_bound}
\end{equation}
where $\mathcal{C} = \sqrt{2/\pi}$ is a constant governed by the 2D cylindrical wave expansion.
\end{proposition}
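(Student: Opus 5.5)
The plan is to differentiate \eqref{eq:single_layer_hankel} by the chain rule, bound the radial derivatives of the two log-compressed Hankel magnitudes through the large-argument behaviour of the Hankel modulus, and sum the two modal contributions with the triangle inequality. Write $z=\kappa\|\mathbf{r}\|_2$ and $M_\nu(z)=|H_\nu^{(1)}(z)|=\sqrt{J_\nu(z)^2+Y_\nu(z)^2}$. Since $J_\nu$ and $Y_\nu$ have no common zeros, $M_\nu>0$ for $z>0$, so $\phi_\nu=\log(1+M_\nu)$ is smooth away from the origin.

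First, for $\mathbf{r}\neq 0$ we have $\nabla_{\mathbf{r}}\|\mathbf{r}\|_2=\mathbf{r}/\|\mathbf{r}\|_2$, which has unit norm. Hence
\begin{equation*}
\nabla_{\mathbf{r}}\varphi(\mathbf{r})=\kappa\big(w_0\phi_0'(z)+w_1\phi_1'(z)\big)\frac{\mathbf{r}}{\|\mathbf{r}\|_2},
\end{equation*}
and so $\|\nabla_{\mathbf{r}}\varphi\|_2\le \kappa\big(|w_0||\phi_0'(z)|+|w_1||\phi_1'(z)|\big)$. The whole statement therefore reduces to the scalar estimate
\begin{equation*}
|\phi_\nu'(z)|\le \sqrt{2/\pi}\,z^{-1/2}, \qquad \nu\in\{0,1\},\ z\gg 1.
\end{equation*}
Substituting $z=\kappa\|\mathbf{r}\|_2$ then gives $\kappa\cdot\sqrt{2/\pi}\,(\kappa\|\mathbf{r}\|_2)^{-1/2}=\mathcal{C}\,\kappa^{1/2}\|\mathbf{r}\|_2^{-1/2}$, which is exactly \eqref{eq:asymptotic_bound}.

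Second, for the scalar estimate, write $\phi_\nu'(z)=M_\nu'(z)/(1+M_\nu(z))$. Since $1+M_\nu\ge 1$, this gives $|\phi_\nu'|\le |M_\nu'|$. A crude route is $|M_\nu'|\le |H_\nu^{(1)\prime}|$, using $H_0^{(1)\prime}=-H_1^{(1)}$ and $H_1^{(1)\prime}=H_0^{(1)}-H_1^{(1)}/z$ together with $|H_\nu^{(1)}(z)|\sim\sqrt{2/(\pi z)}$. This yields the constant $\sqrt{2/\pi}$ only up to a factor $1+O(1/z)$. The difficulty is that, by the Watson monotonicity of $zM_\nu^2(z)$, the quantity $\sqrt{z}\,M_1(z)$ decreases to $\sqrt{2/\pi}$ from above. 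So the crude route cannot deliver the exact constant; it only gives it asymptotically.

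I therefore plan to use the modulus itself. Nicholson's formula, or equivalently the Hankel asymptotic expansion of the modulus $M_\nu(z)^2=\frac{2}{\pi z}\big(1+\frac{4\nu^2-1}{8z^2}+O(z^{-4})\big)$, can be differentiated term by term, since it is a monotone representation. This gives $M_\nu'(z)=-\tfrac12\sqrt{2/\pi}\,z^{-3/2}\big(1+O(z^{-2})\big)$. For $z\gg1$ this is bounded by $\sqrt{2/\pi}\,z^{-1/2}$ with a large margin: the actual gradient is even $O(\kappa^{-1/2}\|\mathbf{r}\|_2^{-3/2})$, so the stated bound is not tight. If I want an explicit threshold for "$\gg 1$", I would use the monotonicity of $zM_\nu^2$ to bound $M_\nu'$ directly: $|M_\nu'|\le M_\nu/(2z)+|(zM_\nu^2)'|/(2zM_\nu)$. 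From this, any $z\ge 1$ suffices once the constants are checked numerically for $\nu=0,1$.

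The main obstacle is precisely this control of $M_\nu'$, or of $|H_\nu^{(1)\prime}|$, with the sharp constant $\sqrt{2/\pi}$ rather than $\sqrt{2/\pi}(1+o(1))$. I expect to resolve it with the modulus-derivative argument above, whose leading term already has the extra $z^{-1}$ decay. The remaining steps, namely the chain rule and the triangle inequality over the two modes, are routine.
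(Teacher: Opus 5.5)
Your proof is correct, and it takes a genuinely different and sharper route than the paper. The paper bounds $|\phi_\nu'|$ by the modulus of the complex derivative, $|H_\nu^{(1)\prime}|$, after using $(1+|H_\nu^{(1)}|)^{-1}\le 1$. It then applies $H_0^{(1)\prime}=-H_1^{(1)}$, $H_1^{(1)\prime}=\tfrac12(H_0^{(1)}-H_2^{(1)})$ and the triangle inequality, and finally substitutes $|H_\nu^{(1)}(z)|\sim\sqrt{2/(\pi z)}$ for $\nu=0,1,2$. That is precisely your ``crude route'', and your diagnosis applies to it. Asymptotically, $\sqrt z\,|H_1^{(1)}|$ and $\tfrac12\sqrt z\,(|H_0^{(1)}|+|H_2^{(1)}|)$ both exceed $\sqrt{2/\pi}$ by a positive amount of order $z^{-2}$, so the paper obtains the constant only to leading order.

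You instead keep the derivative of the modulus. This exploits that $|H_\nu^{(1)}|$ is non-oscillatory while $H_\nu^{(1)}$ oscillates like $e^{\mathrm{i}z}$, and it gives $|\phi_\nu'(z)|\sim\tfrac12\sqrt{2/\pi}\,z^{-3/2}$. The stated bound then holds with a spare factor of order $1/(2z)$. It also shows that the $\mathcal{O}(\|\mathbf r\|_2^{-1/2})$ decay advertised in the paper's remark is loose: the true gradient is $\mathcal{O}(\kappa^{-1/2}\|\mathbf r\|_2^{-3/2})$. The paper's route is more elementary. It can itself be made rigorous for large $z$ by not discarding $(1+|H_\nu^{(1)}|)^{-1}=1-\sqrt{2/(\pi z)}+O(z^{-1})$, whose $z^{-1/2}$ correction dominates that $O(z^{-2})$ excess.

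Two details in your write-up need tightening.

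\emph{Term-by-term differentiation.} ``Monotone representation'' does not license differentiating an asymptotic series term by term. One valid justification: on the positive axis $M_\nu^2=H_\nu^{(1)}H_\nu^{(2)}$, which is analytic with an expansion valid uniformly in a sector around that axis, so it may be differentiated. Alternatively, compute $J_\nu J_\nu'+Y_\nu Y_\nu'=\mathrm{Re}(\overline{H_\nu^{(1)}}\,H_\nu^{(1)\prime})$ from the standard Hankel expansions of $H_\nu^{(1)}$ and $H_\nu^{(1)\prime}$. The leading $O(z^{-1})$ part of $\overline{H_\nu^{(1)}}\,H_\nu^{(1)\prime}$ is purely imaginary (it is the Wronskian term $2/(\pi z)$), and the real part starts at $-1/(\pi z^2)$.

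\emph{Explicit threshold.} Your identity $M_\nu'=\frac{(zM_\nu^2)'}{2zM_\nu}-\frac{M_\nu}{2z}$, together with $M_0'\le 0$ and $(zM_0^2)'\ge 0$, does give $|M_0'|\le M_0/(2z)$. For $\nu=1$, however, the sign $(zM_1^2)'\le 0$ only bounds $|M_1'|$ from below. There you can use $|\phi_1'|\le (M_0+M_1/z)/(1+M_1)$ instead. Since $M_0\le\sqrt{2/(\pi z)}$, this is at most $\sqrt{2/(\pi z)}$ whenever $z\ge\pi/2$, so $z\ge\pi/2$ is a rigorous threshold for both modes.
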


\begin{proof}
Let $z = \kappa \|\mathbf{r}\|_2$. By the chain rule, $\nabla_{\mathbf{r}} \varphi(\mathbf{r}) = (d\varphi/dz) (\kappa\mathbf{r}^\top / \|\mathbf{r}\|_2)$, yielding $\|\nabla_{\mathbf{r}} \varphi(\mathbf{r})\|_2 = \kappa |d\varphi/dz|$. For each basis $\phi_\nu(z) = \log(1 + |H_\nu^{(1)}(z)|)$, using the contraction inequality and dynamic compression $(1+|H_\nu^{(1)}(z)|)^{-1} \le 1$, we obtain: $|d\phi_\nu/dz| \le |d H_\nu^{(1)}(z)/dz|$.
Using the Bessel recurrence relations $d H_0^{(1)}(z)/dz = -H_1^{(1)}(z)$ and $d H_1^{(1)}(z)/dz = \frac{1}{2} [ H_0^{(1)}(z) - H_2^{(1)}(z) ]$, applying the triangle inequality yields:
\begin{equation}
\|\nabla_{\mathbf{r}} \varphi(\mathbf{r})\|_2 \le \kappa \Big( |w_0| \big|H_1^{(1)}\big| + \frac{1}{2} |w_1| \big( \big|H_0^{(1)}\big| + \big|H_2^{(1)}\big| \big) \Big).
\label{eq:exact_grad_bound}
\end{equation}
Substituting the leading-order asymptotic decay $|H_\nu^{(1)}(z)| \sim \sqrt{2/(\pi z)}$ for $\nu \in \{0, 1, 2\}$ into Eq.~\eqref{eq:exact_grad_bound} directly yields Eq.~\eqref{eq:asymptotic_bound}.
\end{proof}

\textbf{Remark (Physics-Inspired Structural Regularization):} Eq.~\eqref{eq:asymptotic_bound} reveals a fundamental property: the spatial gradient naturally decays at a rate of $\mathcal{O}(\|\mathbf{r}\|_2^{-1/2})$. Functioning as a physics-inspired distance encoding, this inherent asymptotic envelope acts as a structural inductive bias, preventing the network from fabricating high-frequency spatial noise (spurious gradients) in the far field.

\subsection{Empirical Verification of Spatial Smoothness}
\label{subsec:empirical_verification}

To validate whether the theoretical gradient bound (Proposition~\ref{prop:hankel_lipschitz}) effectively regularizes the learned neural mappings, we first track the global geometric complexity over the training epochs. For a fair comparison across architectures, we evaluate the dimension-normalized DDE as:
\begin{equation}
GC_{\mathrm{avg}} = \frac{GC_{\mathrm{sum}}}{C \times H \times W},
\end{equation}
where $GC_{\mathrm{sum}}$ is the total DDE across the spatial domain, $C$ is the number of input channels, and $H=W=256$ are the spatial dimensions. 

\begin{figure*}[htbp]
\centering
\includegraphics[width=\linewidth]{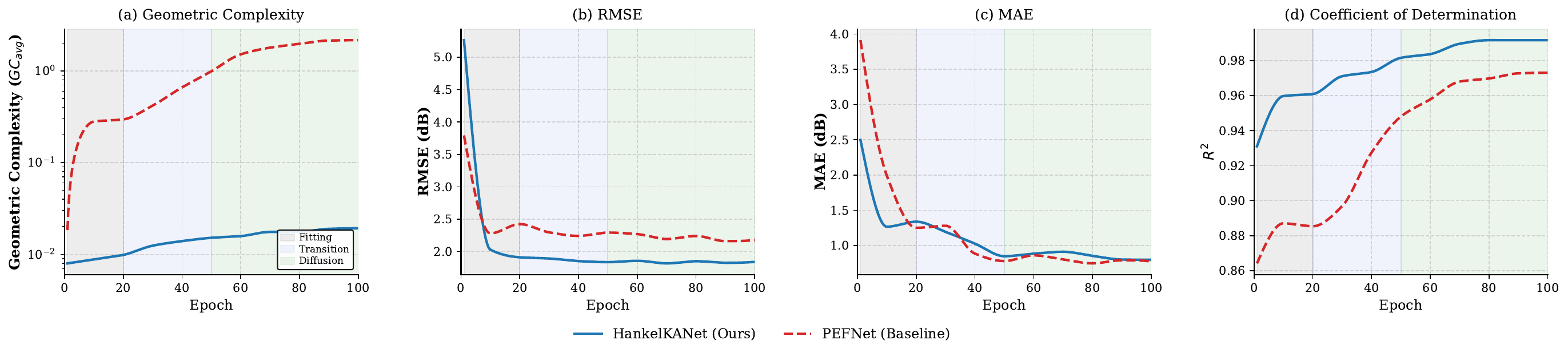}
\caption{Evolution of the Dimension-Normalized Geometric Complexity ($GC_{\mathrm{avg}}$) and prediction metrics over training epochs. Following [16], the shaded regions denote three optimization stages: \textit{fitting} (capturing dominant large-scale structures), \textit{transition} (refining finer details), and \textit{diffusion} (stabilizing the learned mapping). Unlike PEFNet, which experiences a rapid initial surge in complexity before stabilizing at a highly elevated level, HankelKANet maintains a consistently low $GC_{\mathrm{avg}}$ throughout training. Ultimately, HankelKANet settles at a value several orders of magnitude lower than PEFNet. This massive gap indicates that HankelKANet converges to an intrinsically smoother mapping that naturally aligns with large-scale propagation physics.}
\label{fig:gc_metrics_singlecol}
\end{figure*}

As shown in Fig.~\ref{fig:gc_metrics_singlecol}, training evolves through three distinct phases: fitting, transition, and diffusion \cite{toscano2025kkans}. Strikingly, while PEFNet's global geometric complexity surges dramatically during the early stages before plateauing at a high level, HankelKANet maintains a consistently flat and stable $GC_{\mathrm{avg}}$ profile throughout. By the end of training, HankelKANet's complexity remains orders of magnitude lower, indicating that it does not achieve high accuracy by memorizing irregular functional landscapes, but rather converges to an intrinsically smoother mapping.

\begin{figure}[htbp]
\centering
% ===== 图 (a) 2D 热力图 =====
\includegraphics[width=\linewidth]{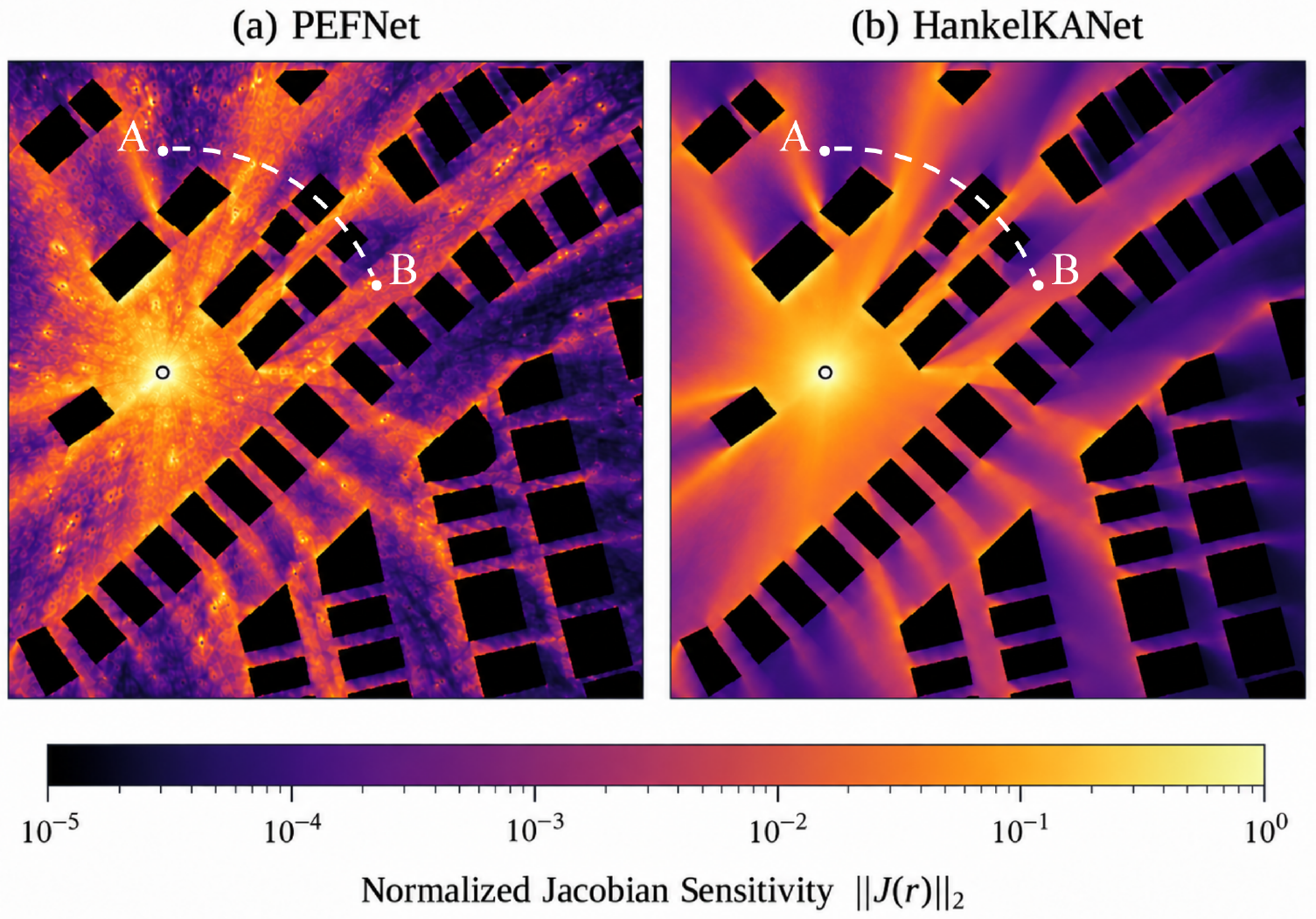}
\centerline{\small (a) 2D spatial heatmap of the Jacobian gradient norm}
\vspace{0.3cm} % 调整两张图之间的垂直间距

% ===== 图 (b) 1D 灵敏度曲线 =====
\includegraphics[width=\linewidth]{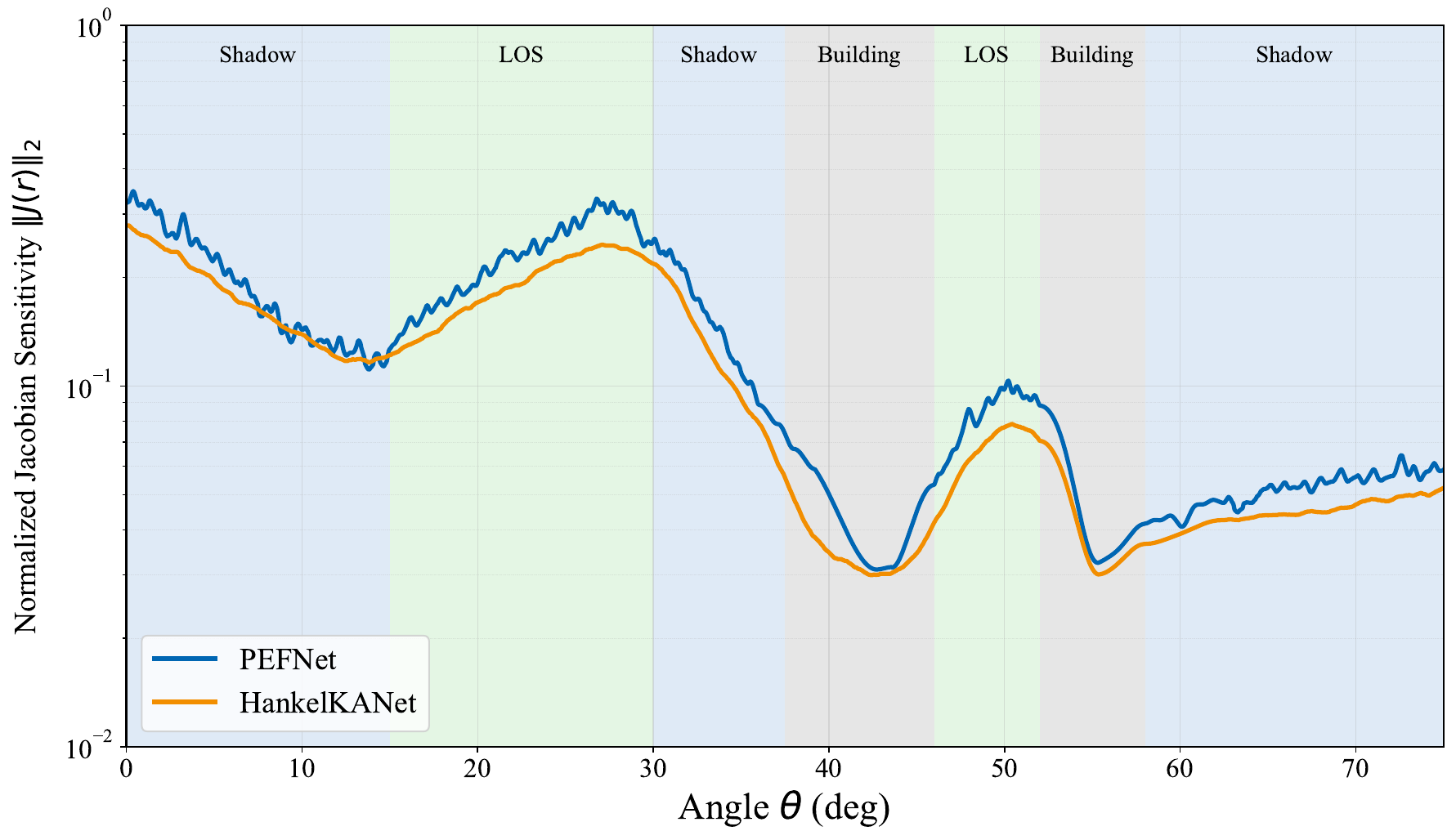}
\centerline{\small (b) 1D sensitivity profile along iso-distance arc $A \to B$}

\caption{Microscopic Jacobian Sensitivity visualization. (a) 2D spatial heatmap of the Jacobian gradient norm. (b) 1D sensitivity profile extracted along an iso-distance arc traversing both Line-of-Sight (LOS) and Non-Line-of-Sight (NLOS) regions. While both models react to diffraction boundaries, PEFNet suffers from pervasive high-frequency fluctuations across all regions. In contrast, HankelKANet imposes physics-inspired smoothness, yielding a stable sensitivity curve that aligns with Proposition 1.}
\label{fig:thermodynamic}
\end{figure}

This macroscopic stability is directly mirrored at the microscopic scene level. Fig.~\ref{fig:thermodynamic}(a) visualizes the spatial Jacobian sensitivity norm, $\|\nabla_{\mathbf{r}} \widehat{\mathrm{PL}}(\mathbf{r})\|_2$, for a representative scene. While PEFNet inherently produces erratic gradient flares and local hotspots, HankelKANet exhibits smooth, radially decaying contours originating from the transmitter. To rigorously isolate these behaviors from the dominant radial path loss decay, we extract a 1D sensitivity profile along an iso-distance arc $A \to B$ ($\|\mathbf{r}\|_2 = \text{const.}$) traversing both LOS and shadowed NLOS regions (Fig.~\ref{fig:thermodynamic}(b)). 

Given the constant radial distance along this arc, the theoretical gradient in unobstructed free space should remain continuous and stable. While both models correctly capture the macroscopic step-changes at diffraction boundaries (buildings), their continuous-space behaviors diverge fundamentally. PEFNet exhibits pervasive jagged fluctuations across all regions, corroborating the macroscopic complexity surge seen in Fig.~\ref{fig:gc_metrics_singlecol}. Without intrinsic physical constraints, standard networks relying on piecewise activations (e.g., ReLU) lack explicit Lipschitz-like bounds and tend to overfit discrete training points, inevitably generating non-physical, Runge-like spatial ripples \cite{gouk2021regularisation}. Conversely, HankelKANet maintains a remarkably smooth curve without spurious oscillations. This confirms that the embedded asymptotic envelope, functioning as a physics-inspired distance encoding, provides strong structural regularization. It inherently bounds the spatial gradient and naturally prevents the network from fabricating high-frequency spatial noise.

\subsection{From Spatial Smoothness to Real-World Generalization}
\label{subsec:real_world_transfer}

The structural spatial smoothness discussed in the previous subsection naturally extends to superior cross-scenario and cross-dataset generalizability. Standard unconstrained networks tend to overfit the specific artifacts of the simulated source domain (as evidenced by their non-physical spatial ripples), which degrades their performance on distinct target domains. By leveraging the physics-inspired inductive bias (Proposition 1) to maintain bounded spatial gradients, HankelKANet is less prone to such overfitting, allowing it to extract domain-invariant wave propagation characteristics.

\begin{table}[htbp]
\centering
\caption{Cross-Dataset Generalization Validation: Transfer Performance (RMSE in dB) on the Real-World Huawei RSRPSet}
\label{tab:response_transfer_validation}
\resizebox{\linewidth}{!}{
\begin{tabular}{lcccccc}
\toprule
\multirow{2}{*}{\textbf{Model}} & \multirow{2}{*}{\textbf{Pre-trained}} & \multicolumn{5}{c}{\textbf{Target Domain Data Budget ($\eta$)}} \\
\cmidrule(lr){3-7}
& & \textbf{0\% (Zero-shot)} & \textbf{1\%} & \textbf{5\%} & \textbf{10\%} & \textbf{20\%} \\
\midrule
Scratch Baseline & No  & - & 7.42 & 5.91 & 4.94 & 4.31 \\
\midrule
PEFNet & Yes & 6.43 & 5.73 & 4.92 & 4.48 & 4.12 \\
HankelKANet & Yes & \textbf{5.82} & \textbf{5.14} & \textbf{4.57} & \textbf{4.23} & \textbf{3.98} \\
\bottomrule
\end{tabular}
}
\end{table}

To empirically evaluate this cross-dataset adaptability, we transfer the models pre-trained on our simulated dataset to the real-world Huawei RSRPSet \cite{rsrpset16cpgmcm}. As summarized in Table~\ref{tab:response_transfer_validation}, HankelKANet consistently outperforms PEFNet across all target data budgets ($\eta$). The most notable indicator is the zero-shot scenario ($\eta = 0\%$): without seeing any field data, HankelKANet achieves an RMSE of $5.82$~dB, surpassing PEFNet ($6.43$~dB) and even outperforming the scratch baseline trained with $5\%$ real data ($5.91$~dB). This suggests that HankelKANet successfully transfers underlying physical decay characteristics rather than merely memorizing source-domain simulation features. Furthermore, during sparse adaptation, HankelKANet demonstrates superior data efficiency. Fine-tuning HankelKANet with only $5\%$ of the target data ($4.57$~dB) yields performance comparable to PEFNet using nearly double the budget ($10\%$ data, $4.48$~dB). 

Ultimately, these empirical results confirm that the structural spatial smoothness is not merely an aesthetic advantage, but a fundamental driver for robust real-world generalization and efficient target-domain adaptation.

\section{Conclusion}
This paper presented HankelKANet, a lightweight and physics-inspired neural network for path loss estimation. By integrating asymptotic electromagnetic decay characteristics to function as a physics-inspired distance encoding, the proposed architecture significantly streamlines the required input representations and the overall learning process. Experimental evaluations on both the simulated RadioMapSeer dataset and the real-world Huawei RSRPSet demonstrate that HankelKANet achieves state-of-the-art accuracy with minimal computational overhead. Beyond baseline performance, our analysis confirms that the embedded Hankel prior provides an effective structural regularization. This intrinsic spatial smoothness naturally prevents the network from overfitting to simulator artifacts, which serves as the fundamental driver for its robust cross-dataset generalization and high data efficiency during target-domain adaptation. Ultimately, HankelKANet offers an efficient, physics-guided, and highly practical solution for coverage planning and interference management in emerging 5G/6G networks.

\bibliographystyle{IEEEtran}
\bibliography{IEEEabrv,reference}

\begin{thebibliography}{10}
\providecommand{\url}[1]{#1}
\csname url@rmstyle\endcsname
\providecommand{\newblock}{\relax}
\providecommand{\bibinfo}[2]{#2}
\providecommand\BIBentrySTDinterwordspacing{\spaceskip=0pt\relax}
\providecommand\BIBentryALTinterwordstretchfactor{4}
\providecommand\BIBentryALTinterwordspacing{\spaceskip=\fontdimen2\font plus
\BIBentryALTinterwordstretchfactor\fontdimen3\font minus
  \fontdimen4\font\relax}
\providecommand\BIBforeignlanguage[2]{{%
\expandafter\ifx\csname l@#1\endcsname\relax
\typeout{** WARNING: IEEEtran.bst: No hyphenation pattern has been}%
\typeout{** loaded for the language `#1'. Using the pattern for}%
\typeout{** the default language instead.}%
\else
\language=\csname l@#1\endcsname
\fi
#2}}

\bibitem{phillips2012survey}
C.~Phillips, D.~Sicker, and D.~Grunwald, ``A survey of wireless path loss
  prediction and coverage mapping methods,'' \emph{IEEE Communications Surveys
  \& Tutorials}, vol.~15, no.~1, pp. 255--270, 2012.

\bibitem{joseph2023path}
I.~Joseph, E.~Oghu, and O.~O. Roberts, ``Path loss and models: A survey and
  future perspective for wireless communication networks,'' \emph{International
  Journal of Advanced Networking and Applications}, vol.~15, no.~02, pp.
  5892--5907, 2023.

\bibitem{ruan2025divided}
Y.~Ruan, P.~Chen, Z.~Yang, W.~Zhao, F.~Tang, Z.~Liu, and N.~Kato, ``The
  divided-qubo-based quantum algorithm for optimal wireless link scheduling,''
  \emph{IEEE Transactions on Emerging Topics in Computing}, 2025.

\bibitem{3gpp36873}
\BIBentryALTinterwordspacing
3GPP, ``Study on 3d channel model for lte,'' 3rd Generation Partnership Project
  (3GPP), Technical Report TR 36.873, 2015. [Online]. Available:
  \url{https://portal.3gpp.org/desktopmodules/Specifications/SpecificationDetails.aspx?specificationId=2574}
\BIBentrySTDinterwordspacing

\bibitem{3gpp38901}
\BIBentryALTinterwordspacing
------, ``Study on channel model for 5g new radio (nr),'' 3rd Generation
  Partnership Project (3GPP), Technical Report TR 38.901, 2018. [Online].
  Available:
  \url{https://portal.3gpp.org/desktopmodules/Specifications/SpecificationDetails.aspx?specificationId=3173}
\BIBentrySTDinterwordspacing

\bibitem{yun2015ray}
Z.~Yun and M.~F. Iskander, ``Ray tracing for radio propagation modeling:
  Principles and applications,'' \emph{IEEE access}, vol.~3, pp. 1089--1100,
  2015.

\bibitem{huang2018big}
J.~Huang, C.-X. Wang, L.~Bai, J.~Sun, Y.~Yang, J.~Li, O.~Tirkkonen, and M.-T.
  Zhou, ``A big data enabled channel model for 5g wireless communication
  systems,'' \emph{IEEE Transactions on Big Data}, vol.~6, no.~2, pp. 211--222,
  2018.

\bibitem{levie2021radiounet}
R.~Levie, {\c{C}}.~Yapar, G.~Kutyniok, and G.~Caire, ``Radiounet: Fast radio
  map estimation with convolutional neural networks,'' \emph{IEEE Transactions
  on Wireless Communications}, vol.~20, no.~6, pp. 4001--4015, 2021.

\bibitem{karniadakis2021physics}
G.~E. Karniadakis, I.~G. Kevrekidis, L.~Lu, P.~Perdikaris, S.~Wang, and
  L.~Yang, ``Physics-informed machine learning,'' \emph{Nature Reviews
  Physics}, vol.~3, no.~6, pp. 422--440, 2021.

\bibitem{meng2025physics}
C.~Meng, S.~Griesemer, D.~Cao, S.~Seo, and Y.~Liu, ``When physics meets machine
  learning: A survey of physics-informed machine learning,'' \emph{Machine
  Learning for Computational Science and Engineering}, vol.~1, no.~1, p.~20,
  2025.

\bibitem{raissi2019physics}
M.~Raissi, P.~Perdikaris, and G.~E. Karniadakis, ``Physics-informed neural
  networks: A deep learning framework for solving forward and inverse problems
  involving nonlinear partial differential equations,'' \emph{Journal of
  Computational physics}, vol. 378, pp. 686--707, 2019.

\bibitem{luo2025physics}
K.~Luo, J.~Zhao, Y.~Wang, J.~Li, J.~Wen, J.~Liang, H.~Soekmadji, and S.~Liao,
  ``Physics-informed neural networks for pde problems: A comprehensive
  review,'' \emph{Artificial Intelligence Review}, vol.~58, no.~10, p. 323,
  2025.

\bibitem{jiang2024physics}
F.~Jiang, T.~Li, X.~Lv, H.~Rui, and D.~Jin, ``Physics-informed neural networks
  for path loss estimation by solving electromagnetic integral equations,''
  \emph{IEEE Transactions on Wireless Communications}, 2024.

\bibitem{liu2024kan}
Z.~Liu, Y.~Wang, S.~Vaidya, F.~Ruehle, J.~Halverson, M.~Solja{\v{c}}i{\'c},
  T.~Y. Hou, and M.~Tegmark, ``Kan: Kolmogorov-arnold networks,'' \emph{arXiv
  preprint arXiv:2404.19756}, 2024.

\bibitem{toscano2025pinns}
J.~D. Toscano, V.~Oommen, A.~J. Varghese, Z.~Zou, N.~Ahmadi~Daryakenari, C.~Wu,
  and G.~E. Karniadakis, ``From pinns to pikans: Recent advances in
  physics-informed machine learning,'' \emph{Machine Learning for Computational
  Science and Engineering}, vol.~1, no.~1, pp. 1--43, 2025.

\bibitem{toscano2025kkans}
J.~D. Toscano, L.-L. Wang, and G.~E. Karniadakis, ``Kkans:
  Kurkova-kolmogorov-arnold networks and their learning dynamics,''
  \emph{Neural Networks}, vol. 191, p. 107831, 2025.

\bibitem{chew1999waves}
W.~C. Chew, \emph{Waves and fields in inhomogenous media}.\hskip 1em plus 0.5em
  minus 0.4em\relax John Wiley \& Sons, 1999.

\bibitem{wang2021understanding}
S.~Wang, Y.~Teng, and P.~Perdikaris, ``Understanding and mitigating gradient
  flow pathologies in physics-informed neural networks,'' \emph{SIAM Journal on
  Scientific Computing}, vol.~43, no.~5, pp. A3055--A3081, 2021.

\bibitem{rohrhofer2023data}
F.~M. Rohrhofer, S.~Posch, C.~G{\"o}{\ss}nitzer, and B.~C. Geiger, ``Data vs.
  physics: The apparent pareto front of physics-informed neural networks,''
  \emph{IEEE Access}, vol.~11, pp. 86\,252--86\,261, 2023.

\bibitem{wang2022and}
S.~Wang, X.~Yu, and P.~Perdikaris, ``When and why pinns fail to train: A neural
  tangent kernel perspective,'' \emph{Journal of Computational Physics}, vol.
  449, p. 110768, 2022.

\bibitem{bischof2025multi}
R.~Bischof and M.~A. Kraus, ``Multi-objective loss balancing for
  physics-informed deep learning,'' \emph{Computer Methods in Applied Mechanics
  and Engineering}, vol. 439, p. 117914, 2025.

\bibitem{zhang2002fast}
D.~Zhang, X.-C. Yuan, N.~Ngo, and P.~Shum, ``Fast hankel transform and its
  application for studying the propagation of cylindrical electromagnetic
  fields,'' \emph{Optics Express}, vol.~10, no.~12, pp. 521--525, 2002.

\bibitem{gonzalez2021advanced}
G.~Gonzalez, \emph{Advanced electromagnetic wave propagation methods}.\hskip
  1em plus 0.5em minus 0.4em\relax CRC Press, 2021.

\bibitem{kolmogorov1957representations}
A.~N. Kolmogorov, ``On the representations of continuous functions of many
  variables by superposition of continuous functions of one variable and
  addition,'' in \emph{Dokl. Akad. Nauk USSR}, vol. 114, 1957, pp. 953--956.

\bibitem{ta2024bsrbf}
H.-T. Ta, ``Bsrbf-kan: A combination of b-splines and radial basis functions in
  kolmogorov-arnold networks,'' in \emph{International Symposium on Information
  and Communication Technology}.\hskip 1em plus 0.5em minus 0.4em\relax
  Springer, 2024, pp. 3--15.

\bibitem{xu2025enhancing}
J.~Xu, Z.~Chen, J.~Li, S.~Yang, W.~Wang, X.~Hu, and E.~Ngai, ``Enhancing graph
  collaborative filtering with fourierkan feature transformation,'' in
  \emph{Proceedings of the 34th ACM International Conference on Information and
  Knowledge Management}, 2025, pp. 5376--5380.

\bibitem{seydi2024unveiling}
S.~T. Seydi, Z.~Bozorgasl, and H.~Chen, ``Unveiling the power of wavelets: A
  wavelet-based kolmogorov-arnold network for hyperspectral image
  classification,'' \emph{arXiv preprint arXiv:2406.07869}, 2024.

\bibitem{chao2026rbf}
Z.~Chao, X.~Liu, Z.~Wu, and X.~Li, ``Rbf-kan: Radial basis
  function-kolmogorov-arnold networks,'' \emph{IEEE Internet of Things
  Journal}, 2026.

\bibitem{seydi2024exploring}
S.~T. Seydi, ``Exploring the potential of polynomial basis functions in
  kolmogorov-arnold networks: A comparative study of different groups of
  polynomials,'' \emph{arXiv preprint arXiv:2406.02583}, 2024.

\bibitem{bodner2024convolutional}
A.~D. Bodner, A.~S. Tepsich, J.~N. Spolski, and S.~Pourteau, ``Convolutional
  kolmogorov-arnold networks,'' \emph{arXiv preprint arXiv:2406.13155}, 2024.

\bibitem{dherin2022neural}
B.~Dherin, M.~Munn, M.~Rosca, and D.~Barrett, ``Why neural networks find simple
  solutions: The many regularizers of geometric complexity,'' \emph{Advances in
  Neural Information Processing Systems}, vol.~35, pp. 2333--2349, 2022.

\bibitem{mitrea2020geometric}
D.~Mitrea, I.~Mitrea, and M.~Mitrea, ``Geometric harmonic analysis,''
  \emph{Book Manuscript}, 2020.

\bibitem{yueh2023theoretical}
M.-H. Yueh, ``Theoretical foundation of the stretch energy minimization for
  area-preserving simplicial mappings,'' \emph{SIAM Journal on Imaging
  Sciences}, vol.~16, no.~3, pp. 1142--1176, 2023.

\bibitem{levie2020radiomapseer}
``Radiomapseer dataset,'' \emph{https://radiomapseer.github.io/}, 2020.

\bibitem{rsrpset16cpgmcm}
\BIBentryALTinterwordspacing
Y.~Zheng, ``Rsrpset: the dataset of the 16th cpgmcm,'' 5, collected by Huawei
  for the 16th China Postgraduate Mathematical Contest in Modeling (CPGMCM),
  measured urban LTE RSRP path loss dataset. [Online]. Available:
  \url{https://ieee-dataport.org/documents/rsrpset-dataset-16th-cpgmcm}
\BIBentrySTDinterwordspacing

\bibitem{zhang2019path}
Y.~Zhang, J.~Wen, G.~Yang, Z.~He, and J.~Wang, ``Path loss prediction based on
  machine learning: Principle, method, and data expansion,'' \emph{Applied
  Sciences}, vol.~9, no.~9, p. 1908, 2019.

\bibitem{gomes2018empirical}
I.~R. Gomes, C.~R. Gomes, H.~S. Gomes, and G.~P. d.~S. Cavalcante, ``Empirical
  radio propagation model for dtv applied to non-homogeneous paths and
  different climates using machine learning techniques,'' \emph{PloS one},
  vol.~13, no.~3, p. e0194511, 2018.

\bibitem{rahaman2019spectral}
N.~Rahaman, A.~Baratin, D.~Arpit, F.~Draxler, M.~Lin, F.~Hamprecht, Y.~Bengio,
  and A.~Courville, ``On the spectral bias of neural networks,'' in
  \emph{International conference on machine learning}.\hskip 1em plus 0.5em
  minus 0.4em\relax PMLR, 2019, pp. 5301--5310.

\bibitem{gouk2021regularisation}
H.~Gouk, E.~Frank, B.~Pfahringer, and M.~J. Cree, ``Regularisation of neural
  networks by enforcing lipschitz continuity,'' \emph{Machine Learning}, vol.
  110, no.~2, pp. 393--416, 2021.

\end{thebibliography}
\vspace{-25pt}

\vfill

\end{document}